\title{Adam or Gauss-Newton? A Comparative Study In Terms of Basis Alignment and SGD Noise}
\newcommand\blfootnote[1]{%
  \begingroup
  \renewcommand\thefootnote{}%
  \footnotetext{#1}%
  \endgroup
}
\author{%
  Bingbin Liu$^{*,\dagger}$ \qquad
  Rachit Bansal$^{*,\dagger}$ \qquad
  Depen Morwani$^{*,\dagger}$ \\
  Nikhil Vyas$^{\alpha, \beta}$ \qquad
  David Alvarez-Melis$^{\dagger}$ \qquad
  Sham M. Kakade$^{\dagger}$ \\
  $^{\dagger}$ Kempner Institute at Harvard University \qquad $^{\alpha}$ OpenAI
}
\date{}
\setlist[itemize,1]{leftmargin=1em,itemsep=0.1em,topsep=0.3em}
\titlespacing*{\paragraph}{0pt}{1ex plus .5ex}{0.5ex}
\titleformat{\paragraph}[runin]
  {\normalfont\normalsize\bfseries}
  {\theparagraph}{1em}{}[~~]
\def \dim {d}
\def \dimw {m}
\def \dimh {n}
\def \hiddendim {m}
\def \bt {b_G} %
\def \btH {b_H} %
\def \teacher {\gT}
\def \student {\gS}
\def \distr {\gD} %
\NewDocumentCommand{\loss}{o}{%
  \ell %
  \IfValueT{#1}{^{(#1)}}%
}
\NewDocumentCommand{\Loss}{o}{%
  \gL %
  \IfValueT{#1}{^{(#1)}}%
}
\NewDocumentCommand{\lr}{o}{%
  \eta %
  \IfValueT{#1}{^{(#1)}}%
}
\def \reg {\alpha} %
\def \interp {\gamma} %
\NewDocumentCommand{\param}{o}{%
  \theta %
  \IfValueT{#1}{^{(#1)}}%
}
\NewDocumentCommand{\Param}{o}{%
  \theta %
  \IfValueT{#1}{^{(#1)}}%
}
\NewDocumentCommand{\tparam}{o}{%
  \tilde{\theta} %
  \IfValueT{#1}{^{(#1)}}%
}
\NewDocumentCommand{\grad}{o}{%
  g
  \IfValueT{#1}{^{(#1)}}%
}
\NewDocumentCommand{\gradrot}{o}{%
  \tilde{g}%
  \IfValueT{#1}{^{(#1)}}%
}
\NewDocumentCommand{\Grad}{o}{%
  G
  \IfValueT{#1}{^{(#1)}}%
}
\NewDocumentCommand{\DeltaParam}{o}{%
  \Delta
  \IfValueT{#1}{^{(#1)}}%
}
\NewDocumentCommand{\update}{o}{%
  \delta
  \IfValueT{#1}{^{(#1)}}%
}
\NewDocumentCommand{\paramCov}{o}{%
  M %
  \IfValueT{#1}{^{(#1)}}%
}
\NewDocumentCommand{\tparamCov}{o}{%
  \widetilde{M} %
  \IfValueT{#1}{^{(#1)}}%
}
\def \Id {\mathbf{I}}
\def \Cov {\Sigma}
\def \GN {\text{GN}}
\def \hessian {H}
\def \hGN {H^{(\GN)}} %
\def \power {p} %
\NewDocumentCommand{\precond}{o}{%
  P%
  \IfValueT{#1}{^{(#1)}}%
}
\NewDocumentCommand{\precondD}{o}{%
  D%
  \IfValueT{#1}{^{(#1)}}%
}
\def \diag {\text{diag}}
\def \cond {\kappa} %
\def \eval {\lambda} %
\def \Eval {\Lambda} %
\def \ebasis {U} %
\def \evec {u} %
\def \spectralRad {\gamma}
\def \py {P}
\def \pf {q}
\def \seqlen {T}
\newcounter{algline} %
\newcommand{\alglinenumber}{%
  \refstepcounter{algline} %
  \arabic{algline}:        %
}
\definecolor{RoyalBlue}{RGB}{0,100,170}
\definecolor{peach}{rgb}{1, 0.56, 0.56}
\definecolor{midgray}{RGB}{150,150,150}
\definecolor{EasternBlue}{RGB}{37,150,190}
\definecolor{sand}{RGB}{250,150,120}
\definecolor{grass}{RGB}{120, 190, 50}
\definecolor{sky}{RGB}{50,150,250}
\definecolor{Orange}{RGB}{250,150,50}
\definecolor{Cerulean}{RGB}{80,150,220}
\definecolor{Emerald}{RGB}{62,156,94}
\definecolor{Rouge}{RGB}{250,95,95}
\definecolor{coral}{RGB}{240,128,128}
\definecolor{ColorDef}{RGB}{80, 180, 150}
\definecolor{RevisionRed}{RGB}{240,35,35}
\definecolor{RevisionBlue}{RGB}{80,180,250}
\definecolor{TODO}{RGB}{255,150,50}
\definecolor{shamcolor}{HTML}{1f77b4}   %
\definecolor{davidcolor}{HTML}{ffde21}  %
\definecolor{depencolor}{HTML}{2ca02c}     %
\definecolor{bingbincolor}{HTML}{f26b83}   %
\definecolor{rachitcolor}{HTML}{b027d6}  %
\renewcommand{\Pr}{\mathop{\bf Pr\/}}
\newtheorem{theorem}{Theorem}
\newtheorem*{namedtheorem}{\theoremname}
\newcommand{\theoremname}{testing}
\newtheorem*{theorem*}{Theorem}
\newtheorem{lemma}{Lemma}
\newtheorem*{lemma*}{Lemma}
\newtheorem{claim}{Claim}
\newtheorem{corollary}[theorem]{Corollary}
\newtheorem*{corollary*}{Corollary}
\newtheorem{cor}[theorem]{Corollary}
\newtheorem*{question*}{Question}
\theoremstyle{definition}
\newtheorem*{definition*}{Definition}
\newtheorem*{remark*}{Remark}
\theoremstyle{plain}
\def\eqref#1{equation~\ref{#1}}
\def\1{\bm{1}}
\def\ve{{\bm{e}}}
\def\vx{{\bm{x}}}
\def\mA{{\bm{A}}}
\DeclareMathAlphabet{\mathsfit}{\encodingdefault}{\sfdefault}{m}{sl}
\SetMathAlphabet{\mathsfit}{bold}{\encodingdefault}{\sfdefault}{bx}{n}
\def\gD{{\mathcal{D}}}
\def\gL{{\mathcal{L}}}
\def\gN{{\mathcal{N}}}
\def\gP{{\mathcal{P}}}
\def\gS{{\mathcal{S}}}
\def\gT{{\mathcal{T}}}
\newcommand{\E}{\mathbb{E}}
\newcommand{\R}{\mathbb{R}}
\DeclareMathOperator{\Tr}{Tr}
\begin{document}

\maketitle
\blfootnote{$^*$ Equal contribution. Correspondence to \texttt{\{bliu, rachitbansal, dmorwani\}@g.harvard.edu}.}
\blfootnote{$^{\beta}$ Work done as a PostDoc at Harvard.}

\begin{abstract}
  Diagonal preconditioners are computationally feasible approximate to second-order optimizers, which have shown significant promise in accelerating training of deep learning models.
  Two predominant approaches are based on Adam and Gauss-Newton (GN) methods: the former leverages statistics of current gradients and is the de-factor optimizers for neural networks,
  and the latter uses the diagonal elements of the Gauss-Newton matrix and underpins some of the recent diagonal optimizers such as Sophia.

  In this work, we compare these two diagonal preconditioning methods through the lens of two key factors: the choice of basis in the preconditioner, and the impact of gradient noise from mini-batching.
  To gain insights, we analyze these optimizers on quadratic objectives and logistic regression under all four quadrants.
  We show that regardless of the basis, there exist instances where Adam outperforms both GN$^{-1}$ and GN$^{-1/2}$ in full-batch settings.
  Conversely, in the stochastic regime, Adam behaves similarly to GN$^{-1/2}$ for linear regression under a Gaussian data assumption.
  These theoretical results are supported by empirical studies on both convex and non-convex objectives.
\end{abstract}

\section{Introduction}
\label{sec:intro}

Modern deep learning has shifted away from vanilla (stochastic) gradient descent toward adaptive first-order optimizers with preconditioned updates of the form
$
\param_{t+1} = \param_t - \lr \precond \grad_t
$,
where the preconditioner $\precond \in \R^{\dim \times \dim}$ is often taken to be diagonal. Popular methods such as Adam~\citep{kingma2014adam}, RMSProp~\citep{tieleman2012rmsprop}, Adafactor~\citep{shazeer2018adafactor}, SignSGD~\citep{bernstein2018signsgd0}, and Lion~\citep{chen2023lion} all fall into this category. Their diagonal preconditioners are typically computed from empirical gradient statistics, such as running averages of squared gradients—rather than from true second-order curvature information. This design choice has made them efficient and practical, but it also distances them from the traditional Newton-style motivation that exploits curvature for faster convergence.

Recently, new optimizers such as Sophia \citep{liu2023sophia} have revisited this connection, incorporating approximations to the diagonal of the Gauss-Newton matrix into their preconditioners. This brings the preconditioning closer to a second-order interpretation while remaining computationally feasible.
A recent paper \citep{vyas2024soap} further showed that existing non-diagonal preconditioners such as Shampoo \citep{gupta2018shampoo} can be interpreted as applying a \textit{diagonal preconditioner in a transformed basis} (specifically, Shampoo's eigenbasis), and proposed an optimizer called SOAP that runs Adam in Shampoo's eigenbasis.
Based on this reinterpretation, it becomes natural to study diagonal preconditioners through the lens of their operating bases, comparing their behaviors when defined in the standard parameter space versus in alternative, curvature-informed bases.

    This motivates our central question: can we disentangle the role of the \emph{basis} used for preconditioning from the choice of \emph{diagonal} scaling in that basis? In this work, we explore this question by comparing two canonical choices of diagonal scalings: one based on the running average of squared gradients, as in Adam (which approximates the diagonal of the \emph{empirical} Fisher~\citep{KunstnerHB19}), and another based on the diagonal of the Gauss-Newton (GN) matrix, which reflects curvature information derived from the Fisher or Hessian.
    Since many empirical-gradient-based methods (Adam, Shampoo, SOAP) effectively use the \emph{square root} of its second-moment estimate,
    we additionally consider the \textit{power} for GN and consider preconditioning with both $\GN^{-1}$ and $\GN^{-1/2}$.
    We will formally define these choices in \S\ref{sec:setup}.

    Importantly, these diagonal forms can be applied in arbitrary bases---including the identity basis (used by default in Adam) and the eigenbasis of the GN matrix which are the focus of this work.
    By decoupling the influence of the preconditioning basis from that of the diagonal approximation applied within it, we are able to investigate one guiding question: whether the empirical Fisher (used in Adam and SOAP) offers any advantage over GN-derived curvature estimates, or whether it merely serves as a tractable proxy.

\looseness=-1
\textbf{Our Contributions.}
We show that the effectiveness of diagonal preconditioners depends on two key factors:
(1) \textit{Basis choice}: We compare preconditioning in the eigenbasis of the GN matrix versus in the identity basis.
(2) \textit{Gradient noise}: We analyze both full-batch (population gradient) and stochastic (batch size 1) regimes to isolate how preconditioner behavior is influenced by gradient variance.

We provide theoretical results for linear regression and logistic regression.
\begin{itemize}
    \item \textbf{Sensitivity to basis choice}:
    For linear regression,
    it is well known that GN preconditioning in the eigenbasis yields optimal convergence rates for quadratics.
    However, when the basis is misaligned, we show that Adam can outperform both $\GN^{-1}$ and $\GN^{-1/2}$.
    Further, in the case of logistic regression, Adam can outperform $\GN^{-1}$ \textit{even under the eigenbasis} with full batch update (\S\ref{sec:logistic}).
    
    \item \textbf{Equivalence under noise} (\S\ref{sec:stochastic}): 
    For linear regression in the stochastic regime with Gaussian data,
    we show that Adam behaves similarly to $\GN^{-1/2}$ regardless of basis, suggesting a surprising alignment between its empirical design and curvature-based preconditioning.
\end{itemize}
These results are summarized in a two-by-two grid in \Cref{tab:basis-batch-grid}. 
We further discuss the distinction between $\GN^{-1}$ and $\GN^{-1/2}$ in \S\ref{sec:GN_power}.
The quadratic model and logistic regression provide complementary perspectives, yielding a more complete picture and highlighting the benefit of separating basis and gradient noise considerations.

We complement our theoretical findings with empirical results
\footnote{Code can be found at \url{https://github.com/ClaraBing/Adam_or_GN/}.}
on toy datasets and more realistic problems including CIFAR10 and Transformer experiments (\S\ref{sec:experiments}).
The results align with the theory across all empirical settings, illustrating the practical implications of basis choice and gradient noise.

\begin{table}[t]
  \centering
  \setlength{\tabcolsep}{10pt}  %
  \renewcommand{\arraystretch}{1.25}  %
    \caption{Comparing Adam vs GN diagonal preconditioners
    across two axes: (i) basis choice, and (ii) gradient noise.
    Our theoretical results are based on quadratics (\S\ref{sec:linear_reg})  and logistic regression (\S\ref{sec:logistic}), across the
    \textit{eigen} and \textit{identity} bases
    on full (population) and small (single-sample) batch.
    }
    \label{tab:basis-batch-grid}
   \begin{tabular}{@{}>{\raggedright\arraybackslash}p{2.5cm}@{}>{\centering\arraybackslash}p{7cm}>{\centering\arraybackslash}p{6cm}@{}}
    \toprule
    & \multicolumn{2}{c}{\textbf{Batch-Size Regime}} \\
    \cmidrule(lr){2-3}
    \textbf{Basis Choice} & \textbf{Full Batch} & \textbf{Small Batch} \\
    \midrule
    Eigenbasis &
    $\exists$ logistic example where Adam $> \GN^{-1}$ &
    $\GN^{-1} \geq$ Adam $\approx \GN^{-1/2}$ for quadratics
    \\
    Identity basis &
     $\exists$ quadratic example where Adam $> \GN^{-1}$ &
      Adam $\approx \GN^{-1/2}$ for quadratics
    \\
    \bottomrule
  \end{tabular}
\end{table}

\subsection{Related work}
\label{sec:related_work}

\paragraph{Approximate second-order optimizers}
Recent advances on approximate second-order methods have demonstrated success in large-scale settings, serving as efficient alternatives to classic second-order methods such as Newton's and natural gradient descent, which are computationally bottle-necked to scale to high dimensions.
While first-order diagonal preconditioners are shown to be comparable in practice~\cite{kaddour23gain,zhao2024deconstructing}, leveraging second-order information has proven effective.
Most relevant to our work are methods that can be considered as applying diagonal preconditioners in a chosen basis~\citep{gupta2018shampoo,liu2023sophia,vyas2024soap,jordan2024muon}.
However, there is no clear understanding how the preconditioner and the basis interact.
For instance, Sophia~\citep{liu2023sophia} can be considered as applying $\GN^{-1}$ in the identity basis, which as we will show, is not always desired.
In contrast, our work provides a clarifying decomposition of the design space of these second-order methods by separating the choice of basis in which to perform a diagonal preconditioner, from the choice of the diagonal preconditioner itself. 
For results in the stochastic regime, \cite{martens2014new} provided results the stochastic case and depends on the condition number, similar to \Cref{lem:precond_loss_rate}.
However, their setting crucially differs from ours by assuming that the covariance of the gradients is independent of the current iterate.

\paragraph{Efficient optimizers for large-scale training}
Although preconditioned methods are theoretically appealing for their faster convergence, substantial efforts have been made to translate these gains to practical speedups in wall-clock time, which is crucial in modern large-scale training.
To keep each update step lightweight, it is common to approximate the Hessian using the Gauss-Newton matrix (\Cref{eq:gn_def}), which, despite being biased, relies only on gradient information and is therefore more computationally efficient than the other commonly used Hutchinson estimator.
In addition, when estimating the eigenbasis, one can use the Kronecker factorization in place of the full basis~\citep{martens15KFAC,george2018fast,vyas2024soap}.
In this work, we analyze the full eigenbasis of the Gauss-Newton matrix,
while adopting the Kronecker approximation in the experiments (see \S\ref{app:expr_details} for details).

\paragraph{Adam vs (S)GD}
There have been a lot of interest understanding the comparison of Adam and (stochastic) gradient descent.
Related to the preconditioning perspective in our work, \cite{das2024towards} studies Adam's preconditioning effect on quadratics, and shows that it outperforms SGD when the Hessian is sufficiently ill-conditioned.
A line work focuses the comparison on optimizing Transformers,
which has investigated through the lens of gradient noises~\citep{zhang2020adaptive}, relation to sign descent~\citep{kunstner2023sign}, and the curvature of the landscape~\citep{jiang2023geometry,pan2023toward};
\cite{Ahn24linear} provides a review and a theory-friendly abstraction.
Most related to our work is~\cite{maes2024understanding}, which shows that Adam's advantage over SGD for Transformers rely crucially on the choice of basis.
While these results hinge on properties specific to Transformers, we are interested in understanding of algorithm design with insights that can be generally applicable.

\section{Preliminaries}
\label{sec:setup}

Consider optimizing a function $f: \R^{\dim} \rightarrow \R$ parameterized by the (vectorized) parameter $\param \in \R^{\dim}$ against a loss function $\loss$.
Updates performed by preconditioning optimizers can be seen as
$$\param[t+1] = \param[t] - \lr \cdot (\ebasis \precondD^{\power} \ebasis^\top) \grad[t],$$
where $\lr$ denotes the learning rate,
$\precondD$ is the diagonal preconditioner which is raised to the exponent $\power \in \{-\frac{1}{2}, -1\}$, 
$\ebasis$ is the orthonormal basis on which the preconditioned update is performed, 
and $\grad[t]$ denotes the gradient at time $t$ (which could correspond to either population gradient or stochastic gradient depending on the setting).
The full update is described in~\Cref{alg:precond_opt},
and we discuss the choice of the basis and the diagonal preconditioner below.

We start with describing the \textit{Gauss-Newton} (GN) matrix, which is the first term in the following decomposition of the Hessian:
\begin{equation}
\label{eq:gn_def}
    \hessian := \nabla^2_{\param} \loss = \nabla_{\param} f \nabla^2_f \loss \nabla_{\param} f^\top + \nabla_f \loss \nabla^2_{\param} f := \hGN + \nabla_f \loss \nabla^2_{\param} f.
\end{equation}
Contrast to the Hessian $\hessian$, the GN matrix $\hGN$ requires only first-order information of the network to compute and often serves as a reasonable preconditioner in practice~\citep{SankarKVB21}.
For convex loss functions, which will be the focus of this work, the GN term is positive-semidefinite (PSD) and admits a real-valued eigendecomposition.

\textbf{Basis estimation.}
We focus on comparing two basis choices: 1) the identity basis $\ebasis = I$, and 2) the eigenbasis of the Gauss-Newton matrix $\hGN$.
For the experiments, we additionally consider the Kronecker-factored preconditioner~\citep{martens15KFAC,vyas2024soap} as a computationally efficient approximation of the eigenbasis: for a matrix-valued parameter $\Param \in \R^{\dimh \times \dimw}$, $\hGN \in \R^{\dimh\dimw \times \dimh\dimw}$ is approximated by the outer product of two matrices of dimension $\R^{\dimh \times \dimh}$ and $\R^{\dimw \times \dimw}$;
see \S\ref{app:kron} for details.

\textbf{Diagonal preconditioners.}
Given an orthonormal basis $\ebasis$, we first rotate the gradient $\grad$ into the basis $\gradrot := \ebasis^\top \grad$, then apply a diagonal conditioner $\precondD$ to the rotated gradient.
We are interested in two types of $\precondD$: Adam and Gauss-Newton.

For Adam, we use the following definition for theoretical analyses, where the preconditioner has diagonal entries
\begin{equation}
\label{eq:diag_adam}
    \precondD[A]_{ii}
    := \big(\E[(\gradrot(x)_{i})^2]\big)^{-1/2}
    = \big(\E[(\evec_i^\top \grad(x))^2]\big)^{-1/2},
\end{equation}
where the expectation is over all possible batches.
Note that in the full batch case, this simply corresponds to the rotated gradient,
while in the stochastic case, it depends on the norm of the per-sample rotated gradient.
This is motivated from the practical version of Adam, which maintains a running average of the gradients seen during the training.

For Gauss-Newton, it takes an additional exponent parameter $\power \in \{-\frac{1}{2}, -1\}$ and computes the diagonal elements as
\begin{equation}
\label{eq:diag_gn}
    \precondD[\GN]_{ii}
    := (\evec_i^\top \hGN \evec_i)^{\power},
\end{equation}
where $\evec_i$ is the $i^{\text{th}}$ vector in the given basis.
In particular, when $\ebasis$ is the eigenbasis of $\hGN$, $\{\evec_i^\top \hGN \evec_i\}_{i \in [\dim]}$ give the eigenvalues of $\hGN$.

The preconditioners for Adam and Gauss-Newton at batch size $1$ for standard losses like cross-entropy (CE) and Mean Squared Error (MSE) correspond respectively to \textit{empirical Fisher matrix} and the \textit{Fisher matrix}.
The former is defined with gradients with respect to labels from the true data distribution, whereas the latter is defined with respect to the output of the model.

\begin{algorithm}[t]
  \centering
  \caption{Preconditioned optimizer}\label{alg:precond_opt}
  \begin{tabular}{@{}r p{0.9\textwidth}@{}}
    \alglinenumber\ & \textbf{Input:} %
    $\param[0]$,
    \textsc{BasisType} $\in \{\text{Id, EigenBasis}\}$,
    \textsc{precond} $\in \{\text{Adam}, \GN\}$,
    power $\power \in \{-0.5, -1\}$,
    learning rate $\{\eta_t\}_{t=1}^T$,
    regularization coefficient $\epsilon$,
    gradient batch size $\bt$, and basis estimation batch size $\btH$.
    \\ 
    \alglinenumber\ & \textbf{for} $t=1$ \textbf{to} $T$ \textbf{do} \\
    \alglinenumber\ & \quad Sample a batch of $\bt$ samples $X_G := \{x_i\}_{i \in [\bt]}$. \\
    \alglinenumber\ & \quad Compute batch loss $\loss_t(\param[t]; X_G)$ and gradient $\grad[t] = \nabla \loss_t(\param[t]; X_G)$. \\
    \alglinenumber\ & \quad Sample a batch of $\btH$ samples $X_H := \{x_i\}_{i \in [\btH]}$. \\
    \alglinenumber\ & \quad Compute the Gauss-Newton matrix $\hGN$ using $X_H$.\\
    \alglinenumber\ & \quad \textbf{if} \textsc{BasisType} == Id \textbf{then} \textcolor{gray}{\texttt{// Basis choice}} \\
    \alglinenumber\ & \quad \quad Basis $\ebasis \gets I$. \\
    \alglinenumber\ & \quad \textbf{elif} \textsc{BasisType} == EigenBasis \textbf{then} \\
    \alglinenumber\ & \quad \quad Basis $\ebasis \gets \text{EigenDecomposition}(\hGN)$. \\ %
    \alglinenumber\ & \quad Compute the basis-rotated gradient $\gradrot[t] = \ebasis^\top \grad[t]$. \\
    \alglinenumber\ & \quad \textbf{if} \textsc{precond} == Adam \textbf{then} \textcolor{gray}{\texttt{// Diagonal preconditioner choice}} \\
    \alglinenumber\ & \quad \quad Compute the diagonal preconditioner as $\precondD_{ii} = \big(\E[(\gradrot(x)_{i})^2]\big)^{-1/2}$. \\
    \alglinenumber\ & \quad \textbf{elif} \textsc{precond} == GN \textbf{then} \\
    \alglinenumber\ & \quad \quad Compute the diagonal preconditioner as $\precondD_{ii} = (u_i^\top \hGN u_i)^{\power}$. \\
    \alglinenumber\ & \quad $\param[t+1] =  \param_t - \ebasis \precondD\gradrot[t] = \param_t - \ebasis \precondD\ebasis^\top \grad[t].$ \\
  \end{tabular}
\end{algorithm}

\section{Theoretical Analysis on Linear Regression}
\label{sec:linear_reg}

This section presents results on linear regression with the mean-squared loss.
The goal is to learn a function $f_{\param}(x) = \param^\top x$ with loss $\loss(\param) = \frac{1}{2} \E[((\param - \param^*)^\top x)^2]$, where $\param^*$ is the ground truth parameter, and the input is Gaussian following $x \sim \gN(0, \Cov_x)$.
For this setup, the vanilla gradient descent update has $\DeltaParam[t+1] := \param[t+1] - \param^* = (\Id - \lr \Cov_x)\DeltaParam[t]$, and the Gauss-Newton matrix simplifies to the data covariance matrix, i.e. $\hGN = \E[\nabla_{\param} f\nabla_{\param} f^\top] = \Cov_x$.

In the following, we will refer to the basis given by the eigenvectors of $\hGN$ as the ``correct'' basis,
and the identity basis as the incorrect basis.
Under the correct eigenbasis, it is well-known that $\GN^{-1}$ achieves the optimal convergence rate in both full-batch and stochastic setting
\footnote{By stochastic setting we refer to updates where each batch contains a single sample.}
:
when using the full batch, $\GN^{-1}$ converges in 1 step;
for the stochastic setting, $\GN^{-1}$ decreases the loss at a linear rate.
Details are included in \S\ref{app:gn_correct_basis} for completeness.

This section therefore focuses on comparing Adam and Gauss-Newton in other scenarios considered in \Cref{tab:basis-batch-grid}:
For the full batch setting, we show that Adam and $\GN^{-1/2}$ can both outperform $\GN^{-1}$ under the incorrect identity basis (\S\ref{sec:full_batch}).
For the stochastic regime, we show that Adam and $\GN^{-1/2}$ behave similarly regardless of the basis choice (\S\ref{sec:stochastic}).

\subsection{Full-batch updates under the incorrect identity basis}
\label{sec:full_batch}

The heterogeneous curvature in deep learning optimization motivates the use of preconditioned methods~\citep{sagun2016eigen,ghorbani19eigen,yao2019pyhessian0,zhang2020adaptive,liu2023sophia}.
While $\GN^{-1}$ is known to achieve optimal convergence under the correct eigenbasis,
what happens when the basis is poorly chosen and fails to reflect the true curvature?
This section shows that the optimality $\GN^{-1}$ is indeed sensitive to the basis choice, even in the absence of gradient noise: under the incorrect identity basis, $\GN^{-1}$ can be outperformed by Adam and $\GN^{-1/2}$.
We show in \S\ref{sec:autotune} that $\GN^{-1}$ can be severely suboptimal when the basis fails to reflect the true curvature, while Adam remains adaptive with its ``auto-tuning'' effect.
Further, the covariance $\Cov_x$ can affect the comparison of $\GN^{-1}$ and $\GN^{-1/2}$ (\S\ref{sec:GN_power} ).

\subsubsection{Adam auto-tunes to the curvature}
\label{sec:autotune}

In this section, we show that GN is sensitive to the basis choice.
We provide an example where the wrong basis obscures the true curvature, voiding GN's adaptiveness.

Consider a quadratic problem with input covariance
\begin{equation}
\label{eq:sparse_example}
\Cov_x := \E[xx^\top] = \begin{bmatrix}
   \mathbf{1}\mathbf{1}^\top & \mathbf{0} \\
    \mathbf{0} & \Id
\end{bmatrix} \in \R^{2\dim \times 2\dim},
\end{equation}
where $\mathbf{1} \in \R^\dim$ is the all-one vector, and $\Id \in \R^{\dim \times \dim}$ is the identity matrix.
This problem has a block structure that is symmetric among the first $\dim$ coordinates and among the last $\dim$ coordinates.
The two blocks have widely different maximal eigenvalues and hence different optimal learning rates:
The first block has a maximum eigenvalue of $\dim$, and thus the maximum stable learning rate is the $\frac{2}{\dim}$.
In contrast, all eigenvalues for the second block are 1, hence can afford a maximum learning rate of 2.

The wrong basis choice we consider is the identity basis, which hides the true curvature of the problem.
We will see that this makes GN fail to adapt to the curvature of the problem, while Adam remains efficient via an ``auto-tuning'' effect.

\textbf{GN converges slowly.}
When taking $\ebasis = I$, the diagonal preconditioner for \GN has $\precondD[\GN]_{ii} = 1^{\power} = 1$ (recall that $\power$ is the exponent parameter for \GN).
That is, both $\GN^{-1}$ and $\GN^{-\frac{1}{2}}$ simply scale all coordinates by the same factor as all diagonal elements are 1, behaving the same as vanilla gradient descent.

\textbf{Adam ``auto-tunes'' to the curvature.}
Given the symmetry of the problem, we can assume that the gradient norms are the same for coordinates within the same block.
Therefore, Adam effectively acts as normalized gradient descent for the first block, and acts as signed gradient descent in each coordinate in the second block.
Let $\|\grad[t]_{0} \|$ denote the gradient norm for the first block coordinates, and $|\grad[t]_i|$ for $i \in [\dim]$ denote the per-coordinate gradient norm for the coordinates in the second block which evolves independently.

Recall that for quadratic problem, the gradient is $\grad = \Cov_x \DeltaParam$ where $\DeltaParam := \param - \param^*$.
The update has $\DeltaParam[t+1] = (\Id - \lr \Cov_x) \DeltaParam[t]$ for vanilla gradient descent,
and the gradient norm goes down as long as $\lr \leq \frac{2}{\eval_{\max}}$.
For Adam, the updates can be considered as gradient descent with an adaptive learning rate.
Specifically, following \Cref{eq:diag_adam}, the first block updates as $\DeltaParam[t+1]_0 = (\Id - \frac{\lr}{\|\grad[t]_0\|_2} \cdot \Cov_x) \DeltaParam[t]_0$,
and the second block has per-coordinate updates $\DeltaParam[t+1]_i = (\Id - \frac{\lr}{|\grad[t]_i|_2} \cdot \Cov_x) \DeltaParam[t]_i$ for $i \in [\dim]$.
This means:
\vspace{-0.8em}
\begin{enumerate}
    \item $\|\grad[t]_0\|$ decreases provided $\lr/\|\grad[t]_0\| \leq 2/\dim$,
    \item $|\grad[t]_i|$ decreases for $i > 0$ provided $\eta / |\grad[t]_i| \leq 2$.
\end{enumerate}
Thus, after an initial ``burn-in'' period, $\lr / \|\grad[t]_0\|$ reaches $2/\dim$ and oscillates around this value,
while $\lr / |\grad[t]_i|$ oscillates around 2.
We refer to this as the \textit{auto-tuning} of Adam: it adapts to the curvature of different coordinates on its own, by regulating the gradient norms. 
After this burn-in period, we can reduce the learning rate by half at every step, reaching a target error within log number of steps.

This \textit{auto-tuning} effect is similar to adapting to the smoothness of the curvature, which is known to be a property of normalized gradient descent \citep{orabona2023normalizedgradients}.
Note that auto-tuning comes from the norm of the mean gradient in the Adam's denominator for the full batch case. Instead, Adam behaves similarly to $\GN^{-1/2}$ at small batch sizes, as we will show in \S\ref{sec:stochastic}, and does not exhibit this autotuning effect when the gradient variance dominates.
\footnote{Recall the definition of $\precondD[A]_{ii}$ from \Cref{eq:diag_adam}.
For the $i_{th}$ basis vector $\evec_i$, let $\mu_i := \evec_i^\top \E_x[\grad(x)]$ and $\sigma_i^2 := \E[(\evec_i^\top \grad(x) - \mu_i)^2]$ denote the mean and variance of the gradient projection along $\evec_i$.
Consider gradients computed on a batch of size $B$, then 
$(\precondD[A]_{ii})^2 = \mu_i^2 + \frac{1}{B}\sigma_i^2$.
$(\precondD[A]_{ii})^2$ is dominated by the mean for full-batch updates ($B \rightarrow \infty$),
and is often dominated by the variance in the stochastic regime ($B=1$).
}
Concurrent work by \cite{roulet2025perexample} provides consistent empirical evidence that at small batch sizes, keeping only the mean term in the Adam's denominator improves its performance, thus supporting the auto-tuning viewpoint of Adam.

\subsubsection{Which power to use for Gauss-Newton?}
\label{sec:GN_power}

Newton's method was originally proposed with a preconditioner closer to $\GN^{-1}$. However, the square root in the denominator of Adam \citep{kingma2014adam} and Adagrad \citep{duchi11adagrad} has spurred multiple papers~\citep{liu2023sophia,lin24remove,vyas2024soap} questioning the correct power of the preconditioner to be used in practice.
In contrast to optimality of $\GN^{-1}$ under the eigenbasis,
we claim that under the incorrect identity basis, there exists problems for which $\power=0.5$ leads to faster convergence than $\power=1$.
The key idea is that the convergence rate of preconditioned gradient descent~\citep{Boyd_Vandenberghe_2004} depends on the condition number of the preconditioned Hessian.
It then suffices to construct examples where the condition number is better behaved for $\power=0.5$ than $\power=1$.
We provide details in \S\ref{app:gn_power} and accompanying simulation results in \S\ref{sec:simulation}.

\subsection{Stochastic regime: Adam and GN\texorpdfstring{$^{-1/2}$}{-1/2} are equivalent}
\label{sec:stochastic}

The previous section shows that Adam and $\GN^{-1/2}$ can both outperform GN$^{-1}$ when the updates are using population gradients but under a poor basis choice.
In this section, we focus on the stochastic regime (i.e. batch size 1) and show that Adam and $\GN^{-\frac{1}{2}}$ behave similarly regardless of the basis choice. 
Proofs for this section can be found in \S\ref{app:stochastic_regime}.

We first prove a stronger result, showing that for Gaussian input distribution and quadratic loss, empirical Fisher is approximately equivalent to Fisher up to a loss scaling. 
\begin{lemma}
\label{lem:adam_gn_half_fisher}
    For linear regression with Gaussian inputs, the following holds:
    \[ \loss(\param) \cdot \Cov_x \preceq \frac{1}{2} \E[\grad(x) \grad(x)^\top] \preceq 3\loss(\param) \cdot \Cov_x.\]
\end{lemma}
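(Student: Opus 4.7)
The plan is to compute $\E[\grad(x)\grad(x)^\top]$ explicitly using the Gaussian structure, then sandwich it between scalar multiples of $\Cov_x$ by a simple PSD argument.

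First I would write down the per-sample gradient. Since $\loss(\param) = \tfrac12 \E[((\param-\param^*)^\top x)^2]$, the sample gradient is $\grad(x) = (\DeltaParam^\top x)\,x$, where $\DeltaParam := \param - \param^*$. Thus
\begin{equation*}
\E[\grad(x)\grad(x)^\top] \;=\; \E\bigl[(\DeltaParam^\top x)^2 \, x x^\top\bigr].
\end{equation*}
Also, $\loss(\param) = \tfrac12 \DeltaParam^\top \Cov_x \DeltaParam$.

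Next I would reduce to a standard Gaussian by the change of variables $y = \Cov_x^{-1/2} x$ (on the range of $\Cov_x$; the null space is irrelevant since $x$ is supported on the range), and set $\tilde{\Delta} := \Cov_x^{1/2}\DeltaParam$. Then $y \sim \gN(0,\Id)$, $\DeltaParam^\top x = \tilde{\Delta}^\top y$, and $xx^\top = \Cov_x^{1/2}\, yy^\top\, \Cov_x^{1/2}$, so
\begin{equation*}
\E[\grad(x)\grad(x)^\top] \;=\; \Cov_x^{1/2}\, \E\bigl[(\tilde{\Delta}^\top y)^2\, y y^\top\bigr]\, \Cov_x^{1/2}.
\end{equation*}
Applying Isserlis' theorem entrywise (the fourth-moment identity $\E[y_iy_jy_ky_l] = \delta_{ij}\delta_{kl} + \delta_{ik}\delta_{jl} + \delta_{il}\delta_{jk}$) gives $\E[(\tilde{\Delta}^\top y)^2\, yy^\top] = \|\tilde{\Delta}\|^2\, \Id + 2\,\tilde{\Delta}\tilde{\Delta}^\top$. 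Pushing $\Cov_x^{1/2}$ back through and using $\|\tilde{\Delta}\|^2 = \DeltaParam^\top \Cov_x \DeltaParam = 2\loss(\param)$ yields the clean identity
\begin{equation*}
\tfrac{1}{2}\E[\grad(x)\grad(x)^\top] \;=\; \loss(\param)\,\Cov_x \;+\; \Cov_x \DeltaParam\DeltaParam^\top \Cov_x.
\end{equation*}

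From this identity the lower bound is immediate: the second summand is PSD, so $\tfrac12 \E[\grad\grad^\top] \succeq \loss(\param)\,\Cov_x$. For the upper bound, apply the elementary fact $v v^\top \preceq \|v\|^2\, \Id$ to $v := \tilde{\Delta} = \Cov_x^{1/2}\DeltaParam$, then conjugate by $\Cov_x^{1/2}$ on both sides to obtain $\Cov_x\DeltaParam\DeltaParam^\top\Cov_x \preceq \|\tilde{\Delta}\|^2\, \Cov_x = 2\loss(\param)\,\Cov_x$. Substituting back gives $\tfrac12 \E[\grad\grad^\top] \preceq 3\loss(\param)\,\Cov_x$, completing the sandwich.

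I do not expect any significant obstacle. The only bookkeeping care is (i) keeping the factors of $\tfrac12$ straight between the definition of $\loss$ and the factor $\tfrac12$ in the statement, and (ii) handling the case where $\Cov_x$ is rank-deficient, which is resolved by restricting to the range of $\Cov_x$ (equivalently, replacing $\Cov_x^{-1/2}$ by the pseudoinverse square root, since $x$ is supported there and both sides of the claimed PSD inequalities vanish on the null space).
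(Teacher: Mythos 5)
Your proposal is correct and follows essentially the same route as the paper: compute the fourth Gaussian moment (the paper cites Wick's theorem, you cite Isserlis', which are the same identity) to obtain $\tfrac12\E[\grad\grad^\top] = \loss\,\Cov_x + \Cov_x\DeltaParam\DeltaParam^\top\Cov_x$, then sandwich the rank-one term between $0$ and $2\loss\,\Cov_x$. Your version merely makes explicit the whitening change of variables and the $vv^\top \preceq \|v\|^2\Id$ step that the paper leaves implicit.
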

\vspace{-1em}

We then utilize \Cref{lem:adam_gn_half_fisher} to show that $\GN^{-\frac{1}{2}}$ and Adam behave the same upto a scalar constant in any basis under the stochastic regime.
\begin{corollary}
\label{lem:adam_gn_half_update}
    For single-sample updates, the updates of Adam and $\GN^{-\frac{1}{2}}$ differ by a constant.
\[ \frac{1}{\sqrt{3\loss}} \cdot \precondD[\GN, -\frac{1}{2}] \preceq \frac{1}{2} \precondD[A] \preceq \frac{1}{\sqrt{\loss}} \cdot \precondD[\GN, -\frac{1}{2}].\]
        
\end{corollary}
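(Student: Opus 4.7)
The plan is to derive the corollary as a basis-free, coordinate-wise consequence of Lemma~\ref{lem:adam_gn_half_fisher}. Since both $\precondD[A]$ and $\precondD[\GN, -\frac{1}{2}]$ are diagonal matrices by construction, the Loewner ordering between them reduces to comparing their diagonal entries one by one, and the monotonicity of $x \mapsto x^{-1/2}$ on $(0, \infty)$ does the rest. The proof therefore boils down to a scalar inequality applied entrywise.

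First I would unpack the diagonal entries in the chosen orthonormal basis $\{\evec_i\}_{i=1}^{\dim}$. Using the definitions in~\eqref{eq:diag_adam} and~\eqref{eq:diag_gn}, together with the identity $\hGN = \Cov_x$ for linear regression, this gives $(\precondD[\GN, -\frac{1}{2}])_{ii} = (\evec_i^\top \Cov_x \evec_i)^{-1/2}$, while the single-sample ($\bt = 1$) Adam preconditioner satisfies $(\precondD[A])_{ii} = (\evec_i^\top \E[\grad(x)\grad(x)^\top] \evec_i)^{-1/2}$ since the expectation in~\eqref{eq:diag_adam} reduces to the second-moment matrix of the per-sample gradient.

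Next I would sandwich each diagonal entry by projecting the matrix inequality in Lemma~\ref{lem:adam_gn_half_fisher} onto $\evec_i$. Writing $\alpha_i := \evec_i^\top \Cov_x \evec_i$ and $\beta_i := \evec_i^\top \E[\grad(x)\grad(x)^\top] \evec_i$, the lemma gives the scalar sandwich $2\loss \cdot \alpha_i \leq \beta_i \leq 6\loss \cdot \alpha_i$. Applying $x \mapsto x^{-1/2}$ flips the inequalities into $\tfrac{1}{\sqrt{6\loss}}\,\alpha_i^{-1/2} \leq \beta_i^{-1/2} \leq \tfrac{1}{\sqrt{2\loss}}\,\alpha_i^{-1/2}$, from which the claimed bound $\tfrac{1}{\sqrt{3\loss}}\precondD[\GN, -\frac{1}{2}] \preceq \tfrac{1}{2}\precondD[A] \preceq \tfrac{1}{\sqrt{\loss}}\precondD[\GN, -\frac{1}{2}]$ follows by halving and relaxing constants.

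Finally I would lift the entrywise bound back to a Loewner ordering using the fact that for diagonal matrices with non-negative entries, $D_1 \preceq D_2$ iff $(D_1)_{ii} \leq (D_2)_{ii}$ for every $i$. The argument never uses any structural property of the basis; it only projects the matrix inequality of Lemma~\ref{lem:adam_gn_half_fisher} onto an arbitrary $\evec_i$, so the conclusion holds uniformly across the identity basis, the eigenbasis of $\hGN$, and any other orthonormal basis, matching the basis-independence emphasized in the corollary statement. I do not anticipate any real obstacle: all the nontrivial content is carried by Lemma~\ref{lem:adam_gn_half_fisher}, and this corollary is essentially an algebraic repackaging of that lemma on the scale of the preconditioner.
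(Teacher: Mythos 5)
Your approach is exactly the paper's: both project the matrix inequality of Lemma~\ref{lem:adam_gn_half_fisher} onto each basis vector $\evec_i$, use the definitions of the diagonal preconditioners to identify the projected quantities with $(\precondD[A]_{ii})^{-2}$ and $(\precondD[\GN,-1/2]_{ii})^{-2}$, and apply the inverse square root entrywise. The derived sandwich $\tfrac{1}{\sqrt{6\loss}}\,\alpha_i^{-1/2} \leq \beta_i^{-1/2} \leq \tfrac{1}{\sqrt{2\loss}}\,\alpha_i^{-1/2}$ is correct, and the basis-independence point at the end is also exactly the paper's emphasis.

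However, your final step --- deducing the stated corollary ``by halving and relaxing constants'' --- does not actually go through on the lower-bound side. Halving your inequality gives
\[
  \frac{1}{2\sqrt{6\loss}}\,\alpha_i^{-1/2}
  \;\leq\;
  \tfrac{1}{2}\beta_i^{-1/2}
  \;\leq\;
  \frac{1}{2\sqrt{2\loss}}\,\alpha_i^{-1/2},
\]
but $\tfrac{1}{2\sqrt{6\loss}} \approx 0.204/\sqrt{\loss}$ is \emph{smaller} than the claimed lower constant $\tfrac{1}{\sqrt{3\loss}} \approx 0.577/\sqrt{\loss}$, so the lower bound in the corollary does not follow by relaxation; you would be relaxing in the wrong direction. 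The multiplier that makes the stated constants $\tfrac{1}{\sqrt{3\loss}}$ and $\tfrac{1}{\sqrt{\loss}}$ come out exactly is $\sqrt{2}$, not $\tfrac{1}{2}$: multiplying your derived inequality by $\sqrt{2}$ gives $\tfrac{1}{\sqrt{3\loss}}\,\alpha_i^{-1/2} \leq \sqrt{2}\,\beta_i^{-1/2} \leq \tfrac{1}{\sqrt{\loss}}\,\alpha_i^{-1/2}$. This strongly suggests the $\tfrac12$ in the corollary statement is a typo for $\sqrt{2}$ (an easy slip since $(\tfrac12)^{-1/2}=\sqrt2$ arises when inverting the $\tfrac12\E[\grad\grad^\top]$ from the lemma). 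You should have flagged this mismatch rather than waving it away: your derivation is correct, but the constant in the written corollary, taken literally, is not reachable from it.
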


From the above lemmas, we expect Adam and $\GN^{-1/2}$ to have a similar performance for small batch size. However, we still don't know how $\GN^{-1}$ and $\GN^{-1/2}$ compares at small batch.
To answer this, we provide a lemma quantifying the convergence rate of a general preconditioner $\precond$ for linear regression with stochastic Gaussian inputs. Denoting the preconditioned Hessian as $\mA(\precond) := \precond^{1/2}\Cov_x \precond^{1/2}$,
we have that:
\begin{lemma}
\label{lem:precond_loss_rate}
    For a general preconditioner $\precond$, for linear regression with stochastic Gaussian inputs, the following holds:
    \[ \E[\loss[t]] \leq O\left[\left(1 - \frac{\lambda_{\min}(\mA(\precond))}{3\Tr(\mA(\precond))}\right)^t \loss[0]\right]. \]
\end{lemma}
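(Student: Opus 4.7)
My plan is to change variables so that preconditioned SGD reduces to vanilla SGD on a transformed Gaussian regression instance, and then track the plain Euclidean potential $\|y_t\|^2$ rather than the loss itself. The latter choice is the key point: using the loss as the potential produces $\Tr(A^2)$ in the noise term, whereas tracking $\|y\|^2$ produces $\Tr(A)$ instead, matching the claimed rate.

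Set $y_t := \precond^{-1/2}\DeltaParam[t]$ with $\DeltaParam[t] := \param[t] - \param^*$, and $\tilde x_t := \precond^{1/2} x_t \sim \mathcal{N}(0, \mA)$ where $\mA := \mA(\precond) = \precond^{1/2}\Cov_x\precond^{1/2}$. Under this reparameterization, $\loss(\param[t]) = \tfrac{1}{2} y_t^\top \mA y_t$, and the single-sample preconditioned update $\DeltaParam[t+1] = \DeltaParam[t] - \lr\, \precond \, x_t x_t^\top \DeltaParam[t]$ becomes the vanilla SGD iteration $y_{t+1} = y_t - \lr\, \tilde g_t$ with $\tilde g_t := \tilde x_t \tilde x_t^\top y_t$. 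Thus it suffices to analyze vanilla SGD on linear regression with Gaussian features of covariance $\mA$.

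Expand $\E_t[\|y_{t+1}\|^2] = \|y_t\|^2 - 2\lr\, y_t^\top \mA y_t + \lr^2 \E_t[\|\tilde g_t\|^2]$, where $\E_t[\cdot]$ denotes conditional expectation given $y_t$, and compute $\E_t[\|\tilde g_t\|^2]$ via Isserlis' formula. Writing $w := \mA^{1/2} y_t$ and $\tilde x = \mA^{1/2} z$ with $z \sim \mathcal{N}(0, I)$, the identity $\E[(z^\top w)^2 z^\top M z] = \|w\|^2 \Tr(M) + 2 w^\top M w$ for symmetric $M$ gives $\E_t[\|\tilde g_t\|^2] = \|w\|^2 \Tr(\mA) + 2 w^\top \mA w = 2\loss[t] \Tr(\mA) + 2 y_t^\top \mA^2 y_t$. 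Bounding $y_t^\top \mA^2 y_t \leq \lambda_{\max}(\mA) \cdot y_t^\top \mA y_t \leq \Tr(\mA) \cdot 2\loss[t]$, and using $y_t^\top \mA y_t = 2\loss[t]$, yields $\E_t[\|y_{t+1}\|^2] \leq \|y_t\|^2 - 2\lr\loss[t]\bigl(2 - 3\lr\Tr(\mA)\bigr)$. Choosing $\lr = 1/(3\Tr(\mA))$ makes the parenthesis equal to $1$, and then $\loss[t] \geq \lambda_{\min}(\mA)\|y_t\|^2/2$ gives the per-step contraction $\E_t[\|y_{t+1}\|^2] \leq \bigl(1 - \lambda_{\min}(\mA)/(3\Tr(\mA))\bigr)\|y_t\|^2$. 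Iterating and taking total expectation produces the geometric bound on $\E[\|y_t\|^2]$, and converting back via $\loss[t] \leq \lambda_{\max}(\mA)\|y_t\|^2/2$ together with $\|y_0\|^2 \leq 2\loss[0]/\lambda_{\min}(\mA)$ inflates the prefactor by the condition number $\lambda_{\max}(\mA)/\lambda_{\min}(\mA)$, a problem-dependent constant that is constant in $t$ and absorbed into the $O(\cdot)$.

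The main obstacle is the choice of potential. If one tracks $\E[\loss[t]]$ directly, the fourth-moment identity becomes $\E[\tilde g^\top \mA \tilde g] = 2\loss[t] \Tr(\mA^2) + 2 y_t^\top \mA^3 y_t$, which forces a $\Tr(\mA^2)$ factor and gives only the strictly weaker rate $\lambda_{\min}(\mA)^2/(3\Tr(\mA^2))$. Switching to the Euclidean potential $\|y\|^2$ circumvents this because $\E[\|\tilde g\|^2]$ only picks up $\Tr(\mA)$; the price paid is a harmless condition-number prefactor upon conversion back to loss.
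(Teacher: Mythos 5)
Your proof is correct and is essentially the paper's proof in a cleaner, vector-valued form: the paper's potential $\diag(\mA(\precond)^{-1})^{\top} v_t = \Tr(\precond^{-1}\paramCov[t])$ is exactly $\E[\|y_t\|^2]$ in your change of variables, both arguments invoke the same Gaussian fourth-moment (Isserlis) identity, and both choose the learning rate so that the step term contracts at rate $1 - \lambda_{\min}(\mA)/(3\Tr(\mA))$. The condition-number prefactor $\lambda_{\max}(\mA)/\lambda_{\min}(\mA)$ you carry out explicitly at the end is likewise hidden in the paper's $O(\cdot)$, since the paper's final inequality bounds $\Tr(\mA(\precond)^{-1}\tparamCov[t])$ rather than $\E[\loss[t]] = \Tr(\tparamCov[t])$ directly and the same conversion is implicit.
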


Let $\kappa_s(\mA(\precond)) = \frac{\Tr(\mA(\precond))}{\lambda_{\min}(\mA(\precond))} = \sum_{i \in \dim} \frac{\lambda_i(\mA(\precond))}{\lambda_{\min}(\mA(\precond))}$ denote the condition-number-like quantity that the above bound depends on;
note that $\kappa_s \geq \dim$.
The bound shows that in the correct basis, $\GN^{-1}$ is the optimal preconditioner even in the stochastic regime, as it minimizes the condition number to $\kappa_s(\mA(\precond)) = \kappa_s(\Id) = \dim$. For $\GN^{-1/2}$ in the correct basis, we have $\kappa_s(\mA(\precond)) = \kappa_s((\Cov_x)^{1/2}) = \sum_i \sqrt{\frac{\lambda_i(\Cov_x)}{\lambda_{\min}(\Cov_x)}}$, which is greater than $\dim$ unless $\lambda_{\max}(\Cov_x) = \lambda_{\min}(\Cov_x)$.

\section{Theoretical analysis on logistic regression}
\label{sec:logistic}

The optimality of $\GN^{-1}$ under the eigenbasis holds for quadratics (\S\ref{sec:linear_reg}) but needs not be true in general.
In this section, we show that for logistic regression, Adam can converge faster than $\GN^{-1}$ even under the eigenbasis with full batches.

\textbf{Setup.}
The input $x$ is from the set of $\dim$-dimensional one-hot vectors $\{e_i\}_{i=1}^{\dim}$, with probability $\nu_i := \Pr(x=e_i)$.
Conditional on $x=e_i$, the label $y$ is Bernoulli with mean $\py_i := \Pr(y=1|x=e_i)$.
We assume $0.6 \leq \py_i \leq 0.8$, $\forall i\in[d]$, i.e. the labels are neither deterministic nor fully random, and the optimal parameter has a bounded norm.

We optimize a weight-tied \emph{two-layer linear network} $\pf: \R^{\dim} \rightarrow \R^{\dim}$,
whose output depends on the \emph{squares} of the weights:
for any $\param\in\R^{\dim}$ and for $i \in [\dim]$, we define the model's prediction as 
\begin{equation}
    \pf_i(\param) = \Pr_{\param}(y=1\mid x=e_i)
        = \sigma\!\Bigl(\sum_{j=1}^{\dim} \param_j^{2}x_j\Bigr)
        = \sigma(\param_i^{2}),
    \qquad
    \sigma(z)=\frac1{1+\exp(-z)}.
\end{equation}
The square parameterization makes the problem non-convex, and is analogous to the structure of key-query multiplication in self-attention~\citep{vaswani2017attention}.

In the following, we show that under the natural assumption of non-increasing step sizes, there is a separation between Adam and $\GN^{-1}$ in terms of $\kappa(\nu) := \frac{\nu_{\max}}{\nu_{\min}}$.
We consider local convergence near the optimum.
In particular, choose $\nu$ such as $\kappa(\nu) = \Omega(\dim^{1/2+\delta})$ for some $\delta \in [0, \frac{1}{2}]$, and let $\epsilon$ denote the target parameter error, i.e. we want to find $\param$ such that $\|\param - \param^*\|_2 \leq \epsilon$.
We prove that Adam enjoys dimension-free convergence, whereas $\GN^{-1}$ suffers from a polynomial-in-dimension slowdown.

\textbf{Adam converges in $O(\log(1/\epsilon))$ steps.}
Since Adam effectively performs sign GD, the amount of parameter update is determined by the step size.
The $O(\log(1/\epsilon)$ convergence hence follows directly from starting with a $O(1)$ learning rate and halving the step size every $O(1)$ steps.

\textbf{$\GN^{-1}$ requires $\tilde{\Omega}(\dim^\delta\log(1/\epsilon))$ steps.}
In contrast to Adam, GN's precondtioning can result in an unboundedly large update that makes optimization diverge 
\footnote{Recall that the optimum is bounded given a $\py$ bounded away from 1, which differs from analyses on linearly separable data where the optimum is at infinity~\citep{wu2024large}.}
(see \Cref{eq:gn_update_logistic}),
unless the learning rate is kept small,
which in turn leads to slow convergence.
To show the lower bound, we first state a more general convergence result.
Recall that the $\GN^{-1}$ update is $\param[t+1] = \param[t] - \lr[t] (\hGN(\param)+\reg I)^{-1} \grad[t].$
We assume that $\{\lr[t]\}_{t \geq 0}$ are non-increasing and that the step size schedule and regularization lead to convergence, i.e. $\param[t]\!\to \param^*$ as $t\to \infty$. 
Linearizing the update map at the limit $\param^*$ gives 
\[
    \param[t+1] - \param^*
    =\Bigl(I-\lr^{\infty}(\hGN_*+\reg I)^{-1} \hGN_*\Bigr)
     (\param[t]- \param^*)
     \;+\;O\bigl(\|\param[t]- \param^*\|^{2}\bigr),
\]
where $\hGN_* := \hGN(\param^*)$ and $\lr^\infty = \lim_{t\to\infty} \lr_t$.
We are interested in lower bounding the spectral radius of the local iteration matrix:
\[
    \spectralRad(\lr[\infty],\reg)
      :=\bigl\|
          I-\lr[\infty](\hGN_* + \reg I)^{-1} \hGN_*
       \bigr\|_{2},
\]
as it governs the ultimate \emph{local} rate of convergence that any $\GN$ schedule can achieve. We refer to $\spectralRad(\lr[\infty],\reg)$ as the local contraction factor; a value of $\spectralRad$ close to 1 implies slow convergence.

The main result of this section is a lower bound on $\spectralRad$:
\begin{theorem}
\label{theorem:lower_logistic}
Suppose the weights are initialized at $\param[0] = \frac{1}{\sqrt{\dim}}\cdot \vec{1}$.
Consider any non-increasing step size sequence $\{\lr[t]\}_{t \ge 0}$
and regularization parameter $\reg \geq 0$.
If the Gauss-Newton iterates converge to $\param^*$, i.e. $\param[t] \to \param^*$, then the local contraction
factor $\spectralRad(\lr[\infty], \reg)$ is lower bounded by: 
\[
\spectralRad(\lr[\infty],\reg)
\ge 1- c \sqrt{\log \dim}\,  \max\left\{\frac{1}{\sqrt{\dim}}, \sqrt{\dim/\kappa(\nu)}\right\}, 
\]
for some universal constant $c$.
\end{theorem}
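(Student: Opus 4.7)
The plan is to leverage the full coordinate-wise separability of the problem. Since $f_i(\param) = \param_i^2$ depends only on $\param_i$, both the gradient $\grad_i(\param) = 2\nu_i\param_i(\sigma(\param_i^2)-\py_i)$ and the Gauss-Newton matrix $\hGN(\param) = \diag\bigl(4\nu_i \param_i^2 \sigma(\param_i^2)(1-\sigma(\param_i^2))\bigr)$ are diagonal, reducing the analysis to a per-coordinate one. At $\param^*$ with $(\param^*_i)^2 = \log(\py_i/(1-\py_i)) = \Theta(1)$ (guaranteed by $\py_i \in [0.6,0.8]$), the diagonal entries $\mu_i := (\hGN_*)_{ii} = 4\nu_i(\param^*_i)^2\py_i(1-\py_i)$ are $\Theta(\nu_i)$, so $\mu_{\max}/\mu_{\min} = \Theta(\kappa(\nu))$. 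Writing $r_i := \mu_i/(\mu_i+\reg) \in (0,1]$, the local iteration matrix is diagonal and $\spectralRad(\lr[\infty],\reg) = \max_i |1-\lr[\infty]r_i|$.

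The key technical step is extracting a step-size constraint from the convergence assumption $\param[t]\to\param^*$. Evaluating at $\param[0] = \vec 1/\sqrt{\dim}$ gives $|\grad_i(\param[0])| = \Theta(\nu_i/\sqrt{\dim})$ and $\hGN_{ii}(\param[0]) = \Theta(\nu_i/\dim)$, so the first-step update magnitude on coordinate $i$ is $\Theta\bigl(\lr[0]\nu_i/\sqrt{\dim}\bigr)/\bigl(\Theta(\nu_i/\dim)+\reg\bigr)$. I would argue that for the iterates not to diverge, this update cannot exceed a logarithmic multiple of $\param^*_i$: once $|\param[1]_i|$ is super-logarithmic, $\sigma(\param_i^2)(1-\sigma(\param_i^2))$ decays exponentially, so either $(\hGN+\reg I)^{-1}\grad$ blows up (when $\reg$ is small), or the per-step contraction is only of order $\lr[t]\nu_i/\reg$, which under the non-increasing step-size schedule cannot recover from the overshoot. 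This yields $\lr[0] = \tilde O(1/\sqrt{\dim})$ when $\reg \lesssim \mu_{\max}/\dim$, and $\lr[0] = \tilde O(\sqrt{\dim}\,\reg/\mu_{\max})$ for larger $\reg$, with the $\sqrt{\log\dim}$ arising from the permitted overshoot.

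Plugging $\lr[\infty]\le \lr[0]$ into the spectral-radius expression gives the two branches in the theorem. In Regime A ($\reg$ small), $r_i = \Theta(1)$ for $\kappa(\nu) \leq \dim$ while $\lr[\infty] = \tilde O(1/\sqrt{\dim})$, so $\min_i |1-\lr[\infty]r_i| \geq 1 - \tilde O(1/\sqrt{\dim})$. In Regime B ($\reg$ larger), I would optimize $\reg$ subject to the step-size budget: as $\reg$ grows the spread $(r_{\max}-r_{\min})/(r_{\max}+r_{\min})$ widens while $\lr[0]$ also grows, and the optimum trades these against each other. The balance point places the smallest $r_i$ at $\tilde O(\sqrt{\dim/\kappa(\nu)})$, yielding $\spectralRad \geq 1 - c\sqrt{\log\dim}\sqrt{\dim/\kappa(\nu)}$. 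Combining the two regimes via the max gives the stated bound.

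The main obstacle is the overshoot argument. Rigorously ruling out trajectories that overshoot at step one and then recover requires a Lyapunov-type control of $\max_i \param[t]_i^2$ along the iterations, combining the exponential decay of $\hGN_{ii}$ at large $|\param_i|$ with the non-increasing step-size schedule to show that once the overshoot exceeds a logarithmic threshold, no later step can return $\param[t]$ to the basin of attraction of $\param^*$. A secondary difficulty is the precise Regime B optimization, where the interplay between $\lr[\infty]$ and the condition-number factor needs careful accounting to recover the $\sqrt{\dim/\kappa(\nu)}$ scaling rather than a weaker or stronger one.
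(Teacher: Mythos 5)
Your plan follows the same route as the paper: exploit coordinate separability, show that a rate above a certain threshold forces divergence (which caps $\lr^{\infty}$), bound the eigenvalues $\lambda_i(\hGN_*)$ by $\Theta(\nu_i)$, and substitute the step-size cap into $\spectralRad \geq 1 - \lr^{\infty}\lambda_{\min}/(\lambda_{\min}+\reg)$. You have also correctly identified the source of the $\sqrt{\log d}$ factor (the permitted overshoot margin) and the two branches $\param[0]$ versus $\reg/\param[0]$ in the step-size cap.

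The part you flag as ``the main obstacle'' --- ruling out overshoot-then-recover trajectories with a Lyapunov-type argument --- is exactly what the paper's key technical lemma does, but in a cleaner form than what you sketch. Rather than a Lyapunov argument showing the iterate cannot \emph{return} to the basin of attraction, the lemma shows something stronger and more direct: once $\lr[t] \geq c\sqrt{\log(1/\param[0])}\bigl(\param[0] + \reg/\param[0]\bigr)$, the very first step lands the iterate at $\param[1] \gtrsim c\sqrt{\log(1/\param[0])}$, and then the exponential decay of $4\param^2\sigma(\param^2)(1-\sigma(\param^2))$ (which falls below $\param[0]$ for $\param$ of this size) makes the multiplicative factor $|1 - K(\param)|$ on every subsequent step at least $\sqrt{2}$, so $|\param[t]|$ grows geometrically with no possibility of recovery. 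The non-increasing step-size assumption is not used to ``prevent recovery'' after the fact; it is used so that the threshold condition, once satisfied at $t=0$, persists for all $t$. Two smaller remarks: (i) the paper does not split into regimes of $\reg$; it derives a single bound $\spectralRad \geq 1 - c'\sqrt{\log d}\,(d^{-1/2} + \sqrt{d}\,\nu_{\min}/\nu_{\max})$, which is actually stronger than the stated $\max$ form (note $\sqrt{d}/\kappa \leq \sqrt{d/\kappa}$ since $\kappa\geq 1$), so no optimization over $\reg$ is needed; (ii) when applying the lemma per coordinate, the effective regularization is $\reg/\nu_i$, and the binding constraint comes from the $\nu_{\max}$ coordinate, which is the step that converts the per-coordinate threshold into the global $\lr^{\infty}$ cap.
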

\vspace{-0.6em}

This theorem reveals a basic trade-off: for the Gauss-Newton method to converge globally from our chosen starting point, its final learning rate must be small. This restriction, in turn, hampers its local convergence speed and creates a bottleneck.
The slowdown is substantial under the common conditions of high dimensionality and ill-conditioned data:
\begin{cor}
    For imbalanced input with $\kappa(\nu) = \Omega(\dim^{1/2+\delta})$ for some $\delta \in [0, 1/2]$,
    $\GN^{-1}$ requires $t = \tilde{\Omega}(\dim^{\delta}\log(1/\epsilon))$ steps to reach a parameter satisfying $\|\param[t] - \param^*\|_2 \leq \epsilon$.
\end{cor}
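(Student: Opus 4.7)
The corollary is a translation of the spectral-radius lower bound in Theorem~\ref{theorem:lower_logistic} into an iteration-count lower bound for local linear convergence. The plan proceeds in two stages: simplify the contraction bound under the stated imbalance condition, then run a standard local-convergence unrolling.

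First, I would specialize the theorem's bound to $\kappa(\nu) = \Omega(\dim^{1/2+\delta})$ and simplify the $\max$ inside the bound. Comparing the two candidates $1/\sqrt{\dim}$ and $\sqrt{\dim/\kappa(\nu)}$ and absorbing the $\sqrt{\log \dim}$ prefactor gives an inequality of the form $1 - \spectralRad(\lr[\infty], \reg) \leq \tilde{O}(\dim^{-\delta})$, where $\tilde{O}$ hides polylog-in-$\dim$ factors. Thus no admissible choice of $\lr[\infty]$ and $\reg$ can push the local contraction factor substantially below $1 - \tilde{O}(\dim^{-\delta})$.

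Second, I would convert this contraction rate into the iteration count. Near $\param^*$, the linearization stated in the theorem gives
\[
    \|\param[t+1] - \param^*\|_2 \;\leq\; \spectralRad \, \|\param[t] - \param^*\|_2 \;+\; C\|\param[t] - \param^*\|_2^2
\]
for some $C$ determined by the third derivative of the logistic loss under the square parameterization. Once the iterate is within a ball of radius $O((1-\spectralRad)/C) = O(\dim^{-\delta})$ of $\param^*$, the quadratic remainder is at most a constant fraction of the linear contraction, and the error shrinks geometrically at rate at least $1 - \tilde{O}(\dim^{-\delta})$. Unrolling, taking $\|\param[t_0]-\param^*\|_2$ down to $\epsilon$ requires $t - t_0 \geq \log(\|\param[t_0]-\param^*\|_2/\epsilon)/\log(1/\spectralRad) = \tilde{\Omega}(\dim^\delta \log(1/\epsilon))$ iterations, using $\log(1/\spectralRad) \leq 1 - \spectralRad + O((1-\spectralRad)^2)$ for $\spectralRad$ close to $1$.

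The main obstacle I expect is the direction-matching and warm-up bookkeeping: one must show that (i) the iterates starting at $\param[0] = \frac{1}{\sqrt{\dim}}\vec{1}$ reach a small enough neighborhood of $\param^*$ to enter the linear regime, which is guaranteed to be finite by the convergence hypothesis $\param[t] \to \param^*$ in the theorem and contributes only an additive constant to $t$; and (ii) the error at the entry time has a non-trivial projection onto the slowest-contracting eigenvector of the linearized iteration matrix, so that the $\spectralRad$ rate is actually attained along the trajectory rather than suppressed by a lucky alignment. Genericity of the chosen initialization should handle (ii), while (i) is a purely qualitative continuity statement that does not affect the asymptotic $\tilde{\Omega}(\dim^\delta \log(1/\epsilon))$ scaling.
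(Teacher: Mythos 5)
Your proposal takes the right and essentially only natural route: specialize the spectral-radius lower bound of Theorem~\ref{theorem:lower_logistic}, then convert the near-unit contraction factor into an iteration count via $t \gtrsim \log(1/\epsilon)/(-\log\spectralRad)$. The paper gives no separate proof of the corollary, and this is evidently the intended derivation.

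There is, however, an exponent issue you should pin down. You claim that $1 - \spectralRad(\lr[\infty],\reg) \leq \tilde O(\dim^{-\delta})$ follows from comparing $1/\sqrt{\dim}$ with $\sqrt{\dim/\kappa(\nu)}$. But $\sqrt{\dim/\kappa(\nu)}$ at $\kappa(\nu)=\Theta(\dim^{1/2+\delta})$ equals $\Theta(\dim^{1/4-\delta/2})$, which is $\geq 1$ for every $\delta\in[0,1/2]$, so the theorem bound as written would be vacuous for large $\dim$. The term you actually need is $\sqrt{\dim}/\kappa(\nu)$ (note where the square root sits), which does equal $\Theta(\dim^{-\delta})$ and also dominates $1/\sqrt{\dim}$ for $\delta \leq 1/2$; this is precisely what the proof of Theorem~\ref{theorem:lower_logistic} derives, namely $\sqrt{\dim}\,\nu_{\min}/\nu_{\max}$. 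So the theorem statement contains a typo, and your $\tilde O(\dim^{-\delta})$ is only correct against the corrected bound --- make that correction explicit rather than silent. A second, smaller point: your displayed inequality bounds $\|\param[t+1]-\param^*\|$ from above, but the iteration-count \emph{lower} bound needs the error to stay large, i.e.\ a lower bound $\|\param[t+1]-\param^*\| \geq \spectralRad\|\param[t]-\param^*\| - C\|\param[t]-\param^*\|^2$ along the slowest-contracting eigenvector together with the nonzero-projection argument you sketch in item (ii); align the display with the direction of inequality the unrolling actually uses.
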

This demonstrates a polynomial slowdown in the dimension, highlighting a scenario where the theoretical power of Gauss-Newton is significantly degraded due to the practical requirement of global convergence.
We empirically verify this on Transformers in \S\ref{sec:expr_attn}.

\textit{Remark}: 
Ideally, in a purely local setting, one could choose $\reg=0$ and use a constant final stepsize $\eta^\infty$.
For instance, setting $\eta^{\infty}=1$ would make the iteration matrix zero, yielding $\gamma=0$ and superlinear convergence. In fact, any other constant $\eta^{\infty} \in (0,2)$ would similarly provide rapid linear convergence with a rate independent of the condition number.
However, \Cref{theorem:lower_logistic} shows this ideal scenario is not possible.
As the proof (\S\ref{sec:logistic_proof}) shows, the requirement of ensuring convergence from a specific, natural initialization forces the algorithm's final step size, $\lr[\infty]$, to be small.
This constraint directly degrades the local contraction factor, preventing the rapid convergence one might expect from a Newton-like method.
Further, we note that our result does not contradict with the fast convergence from line search, which does not fall under the non-increasing step size assumption.

\section{Experiments}
\label{sec:experiments}

We provide experimental evidence
to the theoretical results in \Cref{tab:basis-batch-grid}.
Our experiments are broadly divided into two categories:
(i) simulations for examples in \S\ref{sec:linear_reg} and \S\ref{sec:logistic},
(ii) non-convex examples with MLP,
and (iii) Transformer experiments.

\textbf{Experiment details.}
We use the mean square error as the objective function unless otherwise specified.
For numerical stability, we optionally regularize $\precondD$ to be $\precondD + \reg I$ for some small $\reg > 0$.
We sweep over the learning rate $\lr$ and the regularization coefficient $\reg$.
For Adam, we also sweep over the learning rate schedule (constant or step decay) and $\beta_2$; we fix $\beta_1 = 0$, similar to~\cite{das2024towards}.
We use different samples for estimating gradient and Gauss-Newton.
Due to computational considerations, our eigenbasis experiments also consider Kronecker approximation of the full eigenbasis.
We report the mean and standard error based on 10 seeds.
More details are provided in \S\ref{app:expr_details}.

\paragraph{Simulations}
\label{sec:simulation}
This section provides simulation results on the examples in \S\ref{sec:linear_reg} and \S\ref{sec:logistic}.
\vspace{-0.6em}
\begin{itemize}
    \item \textit{Comparing Adam and GN under full-batch updates.}
    We empirically verify the examples provided in the theory where Adam can outperform GN both under the identity basis and the eigenbasis (\Cref{fig:sparse-example}).
    For the identity basis, we consider a 100-dimensional linear regression task where the covariance matrix has a block-wise structure following the construction in \S\ref{sec:autotune}.
    Adam converges quickly due to the auto-tuning effect, whereas Gauss-Newton converge as slowly as vanilla gradient descent.
    For logistic regression (\S\ref{sec:logistic}), our results on a 2048-dimensional problem  (detailed in \S\ref{app:expr_details_logistic}) show that Adam converges faster even under the eigenbasis.

    \item \textit{Comparing powers of Gauss-Newton.}
    \S\ref{sec:GN_power} shows that the comparison of $\GN^{-1}$ and $\GN^{-1/2}$ amounts to comparing the condition number of a particular matrix, for both population and stochastic settings.
    We empirically verify the claim on a 5-dimensional linear regression problem,
    controlling the choice of the covariance $\Cov_x$.
    \Cref{fig:GN-powers} shows the simulation results in this case,
    where $\GN^{-1/2}$ converges faster than $\GN^{-1}$ when using both large and small batches.
    Details are provided in \S\ref{app:gn_power}.
\end{itemize}

\begin{figure}[t]
    \begin{minipage}{.48\textwidth}
        \centering
        \includegraphics[width=\textwidth]{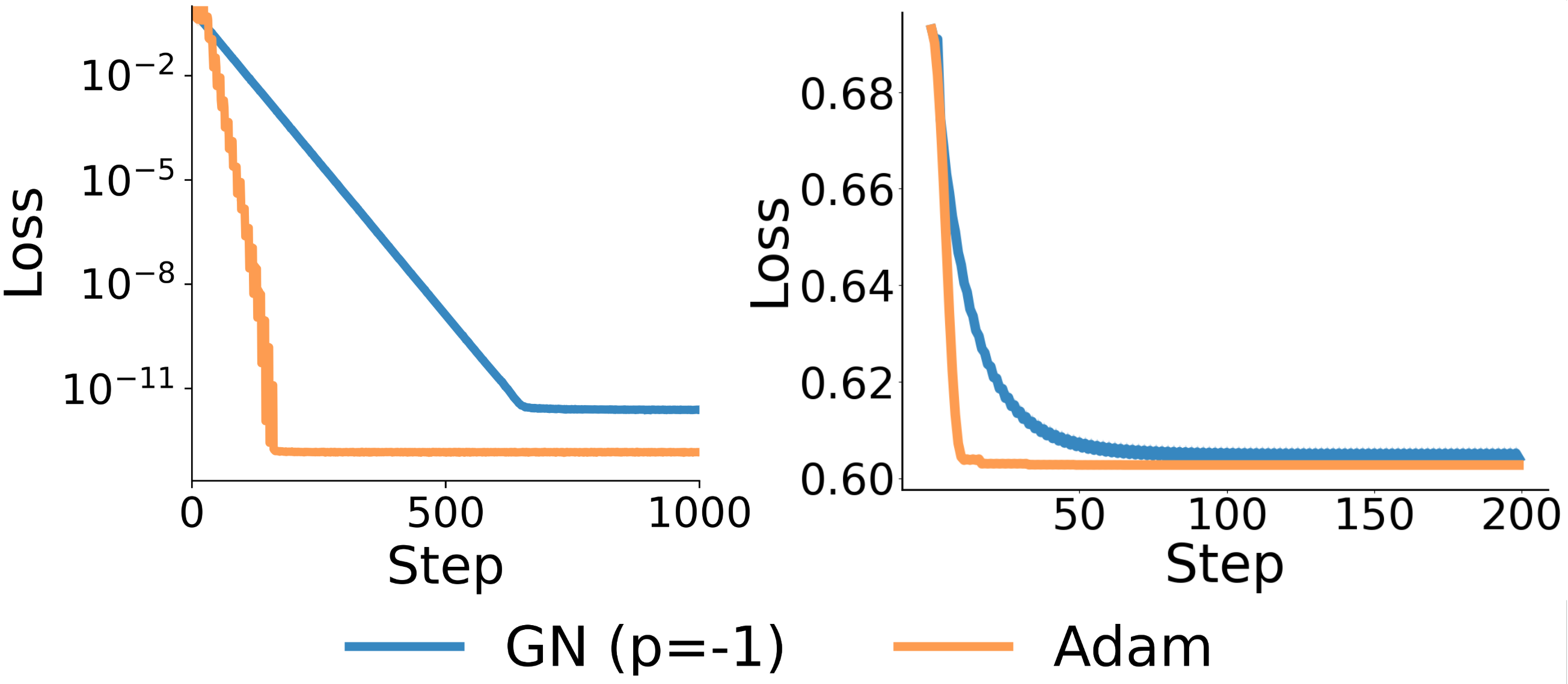}
        \caption{\textbf{Adam converges faster than GN with full batches}, (Left) under the identity basis on a linear regression task with block-wise covariance, where GN fails to adapt to the problem curvature;
        and (Right) under the eigenbasis, for the reparameterized logistic regression task.
        \label{fig:sparse-example}
        }
    \end{minipage}\hfill%
    \begin{minipage}{.47\textwidth}
        \centering
        \includegraphics[width=\textwidth]{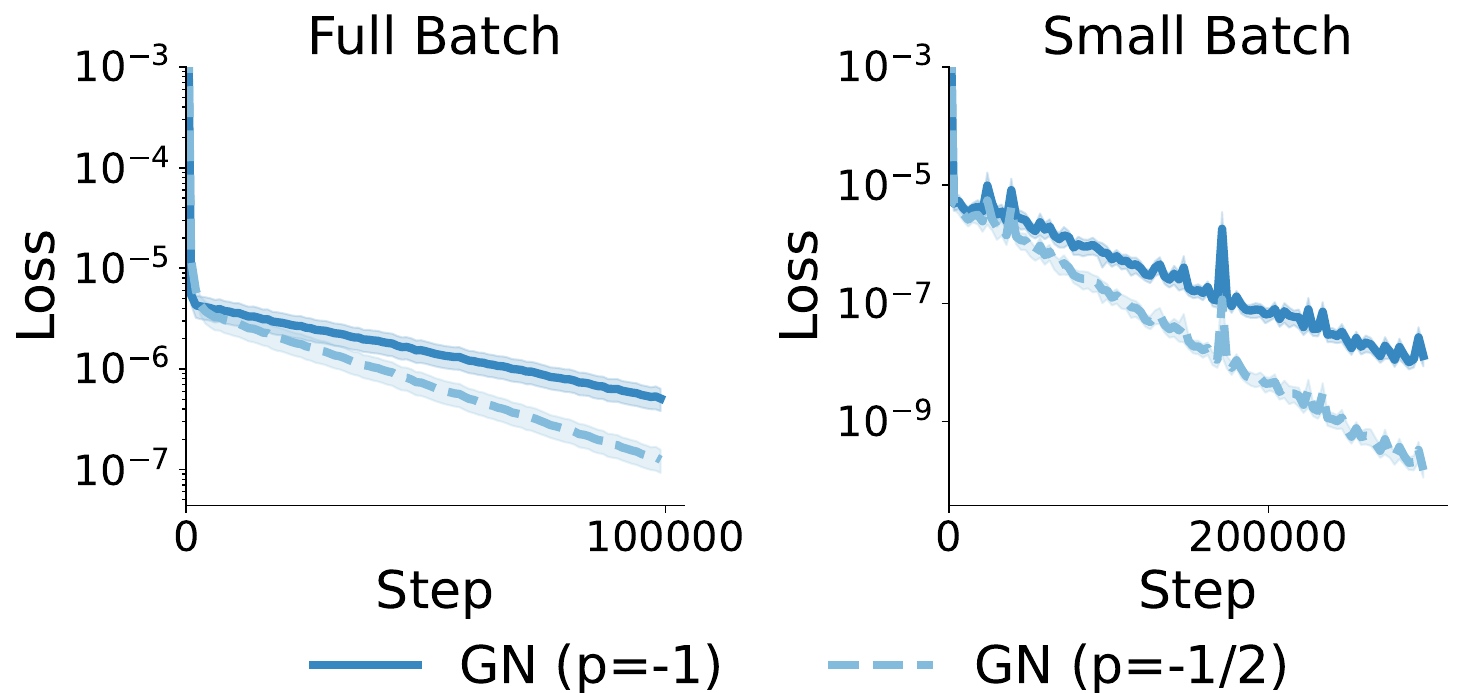}
        \caption{\textbf{Comparing GN power $\power \in \{-\frac{1}{2}, -1\}$}.
        On a regression task where GN$^{-1/2}$ leads to a more favorable condition number, GN$^{-1/2}$ converges faster than GN$^{-1}$ with both small and large batches.
        \label{fig:GN-powers}
        }
    \end{minipage}
\end{figure}

\paragraph{Non-convex examples with MLP}
\label{sec:non_convex_mse}
Next, we consider non-convex optimization with one-hidden-layer MLPs on the following tasks:
1) learning from a random teacher network;
2) feature learning with sparse parity and its variant ``staircase'', where the labels depend on a subset of input coordinates;
and 3) CIFAR10 image classification, where the class labels are treated as one-hot vectors;
\S\ref{app:expr_details_mse} provides details.
Experiments on these tasks cover the full $2\times2$ grid in \Cref{tab:basis-batch-grid},
with respect to batch size (full vs small) and the basis (eigenbasis vs identity).
We use the Kronecker approximation for the eigenbasis of the Hessian, which behaves similarly as the full eigenbasis (\S\ref{app:kron}) while being more compute efficient.
We provide additional experiments on intermediate basis choices by interpolating between the identity and the eigenbasis in \S\ref{app:interpolation}.

As shown in Figures~\ref{fig:teacher-2x2-plots}--\ref{fig:cifar},
our theoretical analyses empirically extend to these four non-convex tasks across both axes of interest.
In particular, across all problems considered, Adam and $\GN^{-1/2}$ closely track one another at small batch sizes, regardless of the basis chosen (subfigures (c), (d)).
Moreover, Adam is close to or better than GN$^{-1}$ when the basis is incorrect (subfigures (a), (c)).

\begin{figure}[t]
  \centering
  \includegraphics[width=\textwidth]{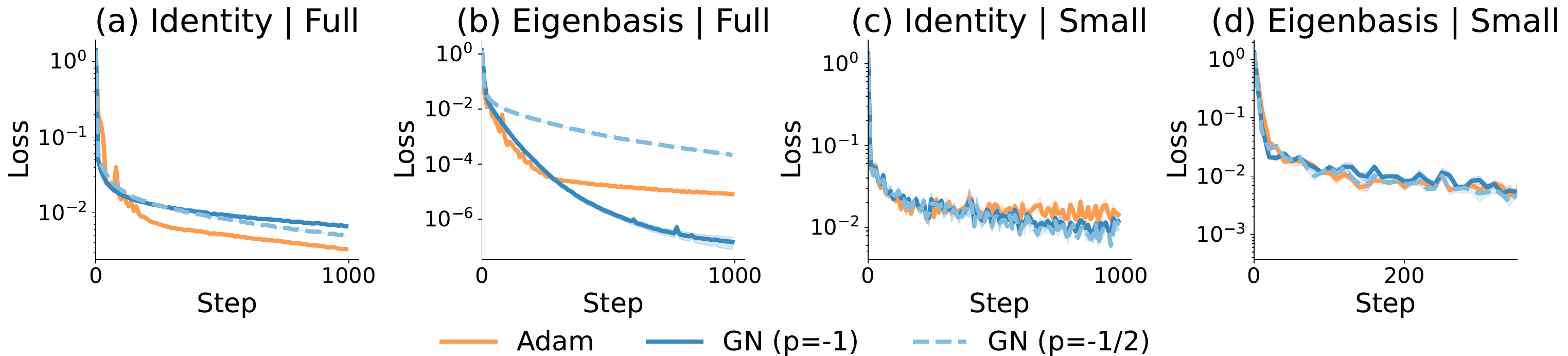}
  \caption{\textbf{Learning from a random teacher}, comparing Adam, $\GN^{-1}$ and $\GN^{-1/2}$ for the full $2 \times 2$ grid (\Cref{tab:basis-batch-grid}).
  }
  \label{fig:teacher-2x2-plots}
  \vspace{-1em}
\end{figure}

\begin{figure}[t]
  \centering
  \includegraphics[width=\textwidth]{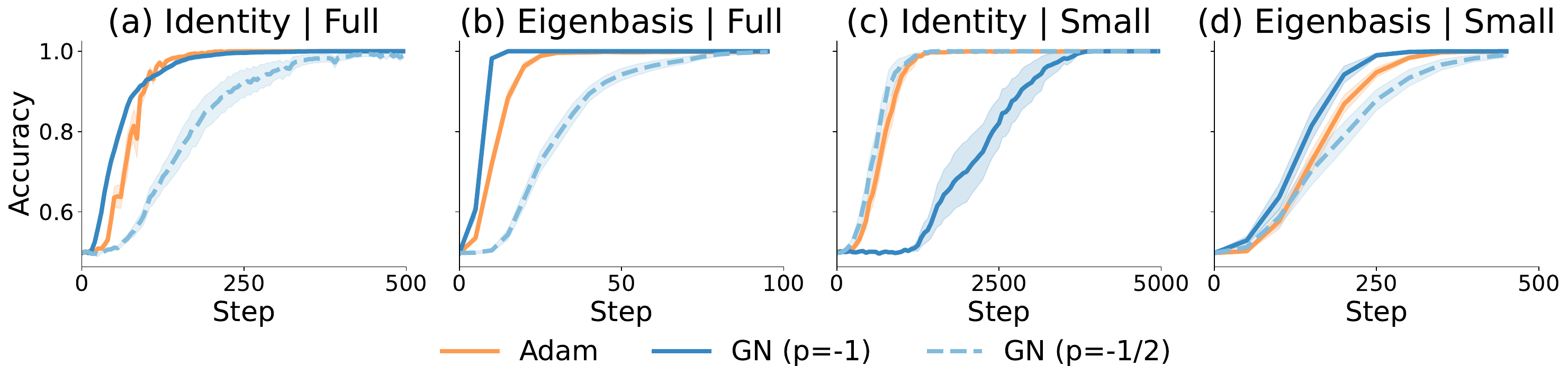}
  \caption{\textbf{Sparse parity}, comparing Adam, $\GN^{-1}$ and $\GN^{-1/2}$ for the full $2 \times 2$ grid (\Cref{tab:basis-batch-grid}).
  }
  \label{fig:parity-2x2-plots}
\end{figure}

\begin{figure}[t]
  \centering
  \includegraphics[width=\textwidth]{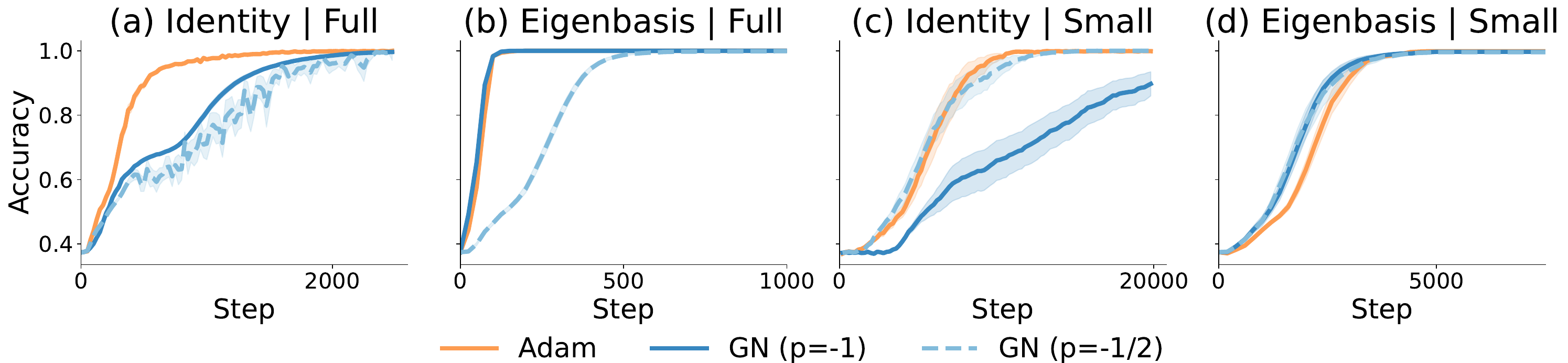}
  \caption{\textbf{Staircase}, comparing Adam, $\GN^{-1}$ and $\GN^{-1/2}$ for the full $2 \times 2$ grid (\Cref{tab:basis-batch-grid}).
  Staircase is a generalization of sparse parity (\Cref{fig:parity-2x2-plots}).
  }
\label{fig:staircase-2x2-plots}
\end{figure}

\begin{figure}[t]
    \centering
    \includegraphics[width=0.95\linewidth]{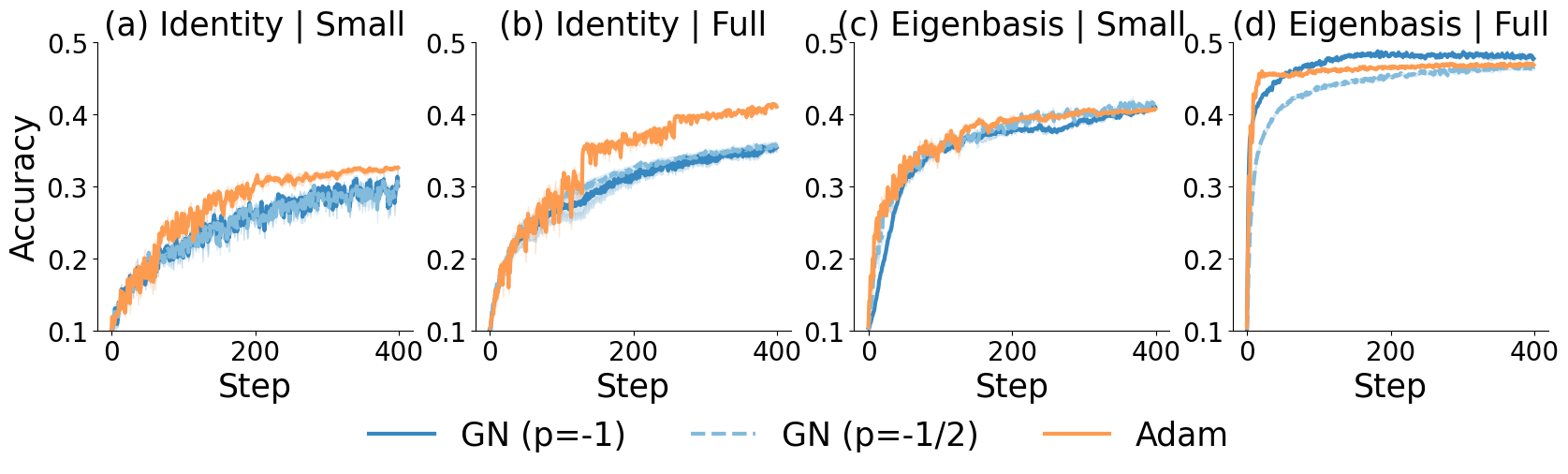}
    \caption{\textbf{CIFAR10}, comparing Adam, $\GN^{-1}$ and $\GN^{-1/2}$ for the full $2 \times 2$ grid (\Cref{tab:basis-batch-grid}).}
    \label{fig:cifar}
\end{figure}

\paragraph{A logistic-like task with Transformers}
\label{sec:expr_attn}
Finally, we experiment with Transformers, whose attention module resembles the structure of the reparametrized logistic regression in \S\ref{sec:logistic}.
We consider a selection-based regression task, where the input sequence contains a series of Gaussian vectors $\{x_t\}_{t\in[\seqlen]}$, followed by a one-hot vector that selects the position $t$ that the regression target is based on; details are provided in \S\ref{app:expr_details_logistic}).
As shown in \Cref{fig:transformer}, Adam outperforms GN with full batch updates even under the eigenbasis, consistent with our theoretical results.

\newpage
\section{Discussion}

\begin{wrapfigure}[14]{r}{0.4\textwidth}
  \centering
  \vspace{-1.75em}
  \includegraphics[width=0.85\linewidth]{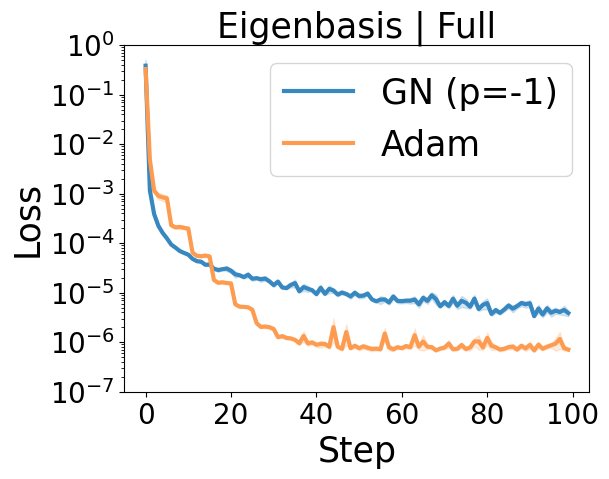}
  \caption{Transformer experiments (\S\ref{sec:expr_attn}):
  Adam outperforms $\GN^{-1}$ under the eigenbasis with full batches.}
  \label{fig:transformer}
\end{wrapfigure}

This work studies the effectiveness of diagonal preconditioners along two key factors: the alignment to the ideal eigenbasis, and the level of gradient noise as influenced by the batch size.
Our theoretical results on linear and logistic regression show that the comparison between Adam and Gauss-Newton (GN)-based diagonal preconditioners is sensitive to the change in either factor (\Cref{tab:basis-batch-grid}):
In the full batch setting, Adam can outperform GN in the identity basis for linear regression, and can even outperform GN in the ideal eigenbasis when considering logistic regression.
In contrast, in the stochastic regime, we show that Adam and $\GN^{-1/2}$ exhibit similar behavior for linear regression regardless of the basis choice, thereby revealing a connection between Adam's design and curvature-based preconditioning.

Our empirical results support the theoretical findings.
In particular, all MLP experiments align with linear regression results, and the Transformer experiments align with our logistic regression results.

It is important to understand whether phenomena and differences observed in small-scale, synthetic setups persist across scale.
We hypothesize that the equivalence between Adam and $\GN^{-1/2}$ in the stochastic regime extends to practical, large-scale training.
In particular, as training progresses, the gradient variance tends to dominate over the gradient mean, mirroring the stochastic regime in which variance drives the dynamics.
Validating this hypothesis in large-scale settings is an interesting direction for future work.
Finally, such equivalence combined with the benefits of Adam's \textit{auto-tuning} at large-batch regimes suggests a promising direction: developing algorithms that exhibit similar desirable auto-tuning behavior even when operating with small batches.

\section*{Acknowledgment}
We thank Alex Damian for the helpful feedback.
This work was enabled in part by a
gift from the Chan Zuckerberg Initiative Foundation to establish the Kempner Institute for the Study
of Natural and Artificial Intelligence.
DM is supported by a Simons Investigator Fellowship, NSF grant DMS-2134157, DARPA grant W911NF2010021,and DOE grant DE-SC0022199.
SK and DM acknowledge support from the Office of Naval Research under award N0001422-1-2377 and the National Science Foundation Grant under award \#IIS 2229881.

\bibliographystyle{iclr2026_conference}
\bibliography{references}

\newpage

\appendix

\section{Theoretical results and omitted proofs}

\subsection{Optimality of \texorpdfstring{$\GN^{-1}$}{GN(-1)} in the correct basis}
\label{app:gn_correct_basis}

For completeness, we provide proofs for the optimality of $\GN^{-1}$ for the quadratic loss under the correct eigenbasis.

\paragraph{Full batch}
For $\GN^{-1}$, the parameter estimation error evolves as
\begin{equation}
\begin{split}
    \param[1] -\param^* =& (\param[0] - \param^*) - \lr \cdot (\hGN)^{-1} \cdot \grad[0]
    \\
    =& \param[0] - \lr \cdot \E[xx^\top]^{-1} \cdot \E[xx^\top(\param - \param^*)]
    = (1-\lr) (\param - \param^*).
\end{split}
\end{equation}
Hence $\GN^{-1}$ can reach the optimum in 1 step with $\lr = 1$.

\paragraph{Stochastic regime}
Let's consider the stochastic regime where each update step is performed with a single sample.
With respect to some algorithm, define
\[
\paramCov[t] := \E[(\param[t] - \param^*) (\param[t] - \param^*)^\top].
\]
which is the expected second-moment matrix of the distance-to-opt.

Recall that $\GN^{-1}$ has updates $\param[t+1] = \param[t] - \lr \Cov_x^{-1} \grad[t]$, where $\grad = (\param - \param^*)^\top x \cdot x$.
We have that:
\begin{equation}
    \paramCov[t+1] = \paramCov[t] - 2 \lr \paramCov[t] + \lr^2 \Tr(\paramCov[t] \Cov_x) \Cov_x^{-1} + 2\lr^2 \paramCov[t].
\end{equation}
Multiplying by $\Cov_x$ and taking the trace leads to:
\begin{equation}
    \E[\loss[t+1]] = \big(1-2\lr+2\lr^2(\dim+1)\big) \E[\loss[t]].
\end{equation}
Setting $\lr = \frac{1}{2(\dim+1)}$ reduces the expected error by a $\frac{1}{2}$ factor every $O(\dim)$ steps.

\subsection{Equivalence of Adam and \texorpdfstring{$\GN^{-0.5}$}{GN(-0.5)} under stochastic regime}
\label{app:stochastic_regime}

We start with establishing the equivalence between the empirical and true Fisher (\Cref{lem:adam_gn_half_fisher}), which will then be used to prove the equivalence of Adam and $\GN^{-\frac{1}{2}}$'s updates.

\subsubsection{Proof of \texorpdfstring{\Cref{lem:adam_gn_half_fisher}:}{Lemma on the} equivalence of the empirical and true Fisher}

\begin{lemma*}[\Cref{lem:adam_gn_half_fisher}, restated]
    For linear regression with Gaussian inputs, the following holds:
    \[ \loss(\param) \cdot \Cov_x \preceq \frac{1}{2}\E[\grad(x) \grad(x)^\top] \preceq 3\loss(\param) \cdot \Cov_x.\]
\end{lemma*}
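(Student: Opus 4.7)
\textbf{Proof plan for \Cref{lem:adam_gn_half_fisher}.}
The plan is to compute $\E[\grad(x)\grad(x)^\top]$ exactly using the Gaussian fourth-moment identity, then sandwich the result between $\loss(\param)\cdot \Cov_x$ and $3\loss(\param)\cdot \Cov_x$ by a single PSD drop (for the lower bound) and one Cauchy--Schwarz step in the $\Cov_x$-inner product (for the upper bound).

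First I would write the per-sample gradient in closed form. Setting $\DeltaParam := \param - \param^*$, the single-sample squared loss has gradient $\grad(x) = x x^\top \DeltaParam$, so that $\grad(x)\grad(x)^\top = x x^\top \DeltaParam \DeltaParam^\top x x^\top$. I would then take expectation entrywise and apply Isserlis' theorem for the Gaussian fourth moment, $\E[x_i x_j x_k x_l] = (\Cov_x)_{ij}(\Cov_x)_{kl} + (\Cov_x)_{ik}(\Cov_x)_{jl} + (\Cov_x)_{il}(\Cov_x)_{jk}$. A short index computation with $A := \DeltaParam\DeltaParam^\top$ (symmetric) collapses the three Wick contractions into
\begin{equation*}
\E\bigl[x x^\top A x x^\top\bigr] = 2\,\Cov_x A \Cov_x + \Cov_x\cdot \Tr(A\Cov_x).
\end{equation*}
Since $\Tr(\DeltaParam\DeltaParam^\top \Cov_x) = \DeltaParam^\top \Cov_x \DeltaParam = 2\loss(\param)$, this gives the clean identity
\begin{equation*}
\tfrac{1}{2}\E[\grad(x)\grad(x)^\top] = \Cov_x \DeltaParam \DeltaParam^\top \Cov_x + \loss(\param)\cdot \Cov_x.
\end{equation*}

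From this identity both bounds are immediate. The lower bound follows simply by dropping the PSD term $\Cov_x\DeltaParam\DeltaParam^\top\Cov_x \succeq 0$. For the upper bound it suffices to show $\Cov_x\DeltaParam\DeltaParam^\top\Cov_x \preceq 2\loss(\param)\cdot \Cov_x$; for any test vector $v$, Cauchy--Schwarz in the $\Cov_x$-inner product yields
\begin{equation*}
v^\top \Cov_x \DeltaParam \DeltaParam^\top \Cov_x v = (v^\top \Cov_x \DeltaParam)^2 \le (v^\top \Cov_x v)(\DeltaParam^\top \Cov_x \DeltaParam) = 2\loss(\param)\cdot v^\top \Cov_x v,
\end{equation*}
which combined with the $\loss(\param)\cdot \Cov_x$ term gives the claimed $3\loss(\param)\cdot \Cov_x$ upper bound.

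There is no serious obstacle here; the main care point is just bookkeeping the three Wick contractions and tracking the factor of $\tfrac{1}{2}$ between $\loss(\param)$ and $\DeltaParam^\top \Cov_x \DeltaParam$, so that the multiplicative constants in the sandwich come out as exactly $1$ and $3$.
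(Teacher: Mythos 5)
Your proof is correct and follows essentially the same route as the paper: apply Wick's (Isserlis') theorem to compute $\E[xx^\top \DeltaParam\DeltaParam^\top xx^\top] = 2\Cov_x\DeltaParam\DeltaParam^\top\Cov_x + \Tr(\Cov_x\DeltaParam\DeltaParam^\top)\Cov_x$ and then sandwich. The only difference is that you spell out, via the Cauchy--Schwarz argument, the PSD inequality $\Cov_x\DeltaParam\DeltaParam^\top\Cov_x \preceq \Tr(\Cov_x\DeltaParam\DeltaParam^\top)\Cov_x$ needed for the upper bound, which the paper asserts without comment.
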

\begin{proof}
    For a given $\param$, let $\paramCov := (\param-\param^*)(\param-\param^*)^\top$.
    Then w.r.t. $\param$, we have
    \begin{equation}
        \E[\grad(x) \grad(x)^\top]
        = \E[x^\top \paramCov x \cdot xx^\top]
        = 2\Cov_x \paramCov \Cov_x + \Tr(\Cov_x\paramCov) \Cov_x
        \preceq 3 \Tr(\Cov_x\paramCov) \Cov_x,
    \end{equation}
    where the last equality follows from Wick's theorem. 
    The lemma follows by noting that $\ell = \frac{1}{2} \E[x^\top \paramCov x] = \frac{1}{2}\Tr(\Cov_x\paramCov)$.
\end{proof}

\subsubsection{Proof of \texorpdfstring{\Cref{lem:adam_gn_half_update}:}{Corollary on the} equivalence of Adam and \texorpdfstring{$\GN^{-\frac{1}{2}}$}{GN(-0.5)}}
\label{app:proof_adam_gn_half_update}

\begin{corollary*}[\Cref{lem:adam_gn_half_update}, restated]
    For single-sample updates, the update of Adam and $\GN^{-\frac{1}{2}}$ differ by a constant.
\[ \frac{1}{\sqrt{3\loss}} \cdot \precondD[\GN, -\frac{1}{2}] \preceq \frac{1}{2}\precondD[A] \preceq \frac{1}{\sqrt{\loss}} \cdot \precondD[\GN, -\frac{1}{2}].\]
        
\end{corollary*}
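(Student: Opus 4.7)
The plan is to reduce the PSD sandwich claim on diagonal preconditioners to a scalar application of Lemma \ref{lem:adam_gn_half_fisher} along each basis direction $\evec_i$. Since both $\precondD[A]$ and $\precondD[\GN,-\frac{1}{2}]$ are diagonal in the chosen basis $\ebasis$, the ordering $\preceq$ between them is equivalent to the coordinate-wise ordering of their diagonal entries, so no operator-monotonicity of the matrix square root is required---everything will be scalar after projecting onto each $\evec_i$.

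First I would unpack the definitions in the linear regression setting, where $\hGN = \Cov_x$. The diagonal entries of the two preconditioners are
\[
    \precondD[\GN,-\frac{1}{2}]_{ii} = (\evec_i^{\top}\Cov_x \evec_i)^{-1/2},
    \qquad
    \precondD[A]_{ii} = (\evec_i^{\top} \E[\grad(x)\grad(x)^{\top}] \evec_i)^{-1/2},
\]
using that $\E[(\evec_i^{\top}\grad(x))^{2}] = \evec_i^{\top}\E[\grad(x)\grad(x)^{\top}]\evec_i$. Next I would instantiate Lemma \ref{lem:adam_gn_half_fisher} at $u=\evec_i$ via the standard fact that $A \preceq B$ implies $u^{\top}Au \le u^{\top}Bu$. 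This yields a purely scalar sandwich
\[
    \loss(\param)\,(\evec_i^{\top}\Cov_x \evec_i)
    \;\le\; \tfrac{1}{2}\,(\evec_i^{\top}\E[\grad(x)\grad(x)^{\top}]\evec_i)
    \;\le\; 3\,\loss(\param)\,(\evec_i^{\top}\Cov_x \evec_i).
\]
Since all three quantities are strictly positive (assuming nondegenerate $\Cov_x$ and $\param \neq \param^*$), I would invert the chain, which reverses the inequalities, and then take a square root termwise. This maps bounds on the second-moment quadratic forms directly to bounds on their $(\cdot)^{-1/2}$, i.e. on the preconditioner entries themselves. Assembling the resulting coordinate-wise inequalities over $i \in [\dim]$ then gives the PSD ordering between the two diagonal matrices.

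The only mildly delicate step---and the one I would be most careful about---is the bookkeeping of the constants: the factor $\tfrac{1}{2}$ inside Lemma \ref{lem:adam_gn_half_fisher} has to be carried through both the reciprocal and the square root, which is precisely what produces the multiplicative constant on $\precondD[A]$ appearing in the statement. I do not expect any genuine obstacle beyond this constant tracking; the corollary is essentially an entrywise reformulation of the preceding lemma, made possible by the diagonal structure of both preconditioners in the common basis $\ebasis$.
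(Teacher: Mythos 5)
Your approach is exactly the paper's: both preconditioners are diagonal in the basis $\ebasis$, so the PSD ordering reduces to entrywise inequalities between $\precondD[A]_{ii} = (\evec_i^\top \E[\grad\grad^\top]\evec_i)^{-1/2}$ and $\precondD[\GN,-\frac12]_{ii} = (\evec_i^\top \Cov_x \evec_i)^{-1/2}$, and one projects Lemma~\ref{lem:adam_gn_half_fisher} onto each $\evec_i$ and then inverts and takes square roots. The paper's proof is precisely this observation, stated more tersely.

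One small caution on the step you flag as ``mildly delicate'': carrying the $\tfrac12$ from Lemma~\ref{lem:adam_gn_half_fisher} through the reciprocal and square root does \emph{not} reproduce the $\tfrac12$ in the displayed corollary. Writing $a_i = \evec_i^\top\Cov_x\evec_i$ and $b_i = \evec_i^\top\E[\grad\grad^\top]\evec_i$, the lemma gives $\loss\, a_i \le \tfrac12 b_i \le 3\loss\, a_i$; inverting and square-rooting yields
\[
\tfrac{1}{\sqrt{3\loss}}\, a_i^{-1/2} \;\le\; \sqrt{2}\, b_i^{-1/2} \;\le\; \tfrac{1}{\sqrt{\loss}}\, a_i^{-1/2},
\]
i.e.\ the multiplicative constant on $\precondD[A]$ should be $\sqrt{2}$, not $\tfrac12$. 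This is a harmless constant typo in the corollary statement (and does not affect the qualitative ``differ by a constant'' conclusion), but your write-up asserts that the $\tfrac12$ survives intact, which would be caught if you actually executed the constant tracking you describe.
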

\begin{proof}

Adam's preconditioner is based on
\begin{equation}
    \precond[A] := \E_{(x,y) \sim \distr}[\grad(x) \grad(x)^\top].
\end{equation}

Given a basis $\ebasis$, the diagonal preconditioner given by Adam has entries
\begin{equation}
    (\precondD[A]_{ii})^{-1} = \sqrt{\evec_i^\top \precond[A] \evec_i}.
\end{equation}
The relation between $\precondD[A]$ and $\precondD[\GN, -0.5]$ follows from \Cref{lem:adam_gn_half_fisher} and the fact that $(\precondD[\GN, -0.5]_{ii})^{-1} = \sqrt{\evec_i^\top \Cov_x \evec_i}$.

\end{proof}

\paragraph{Part 2: when $\hGN$ is based on a single sample}
On single-sample batches, the gradient is $\grad(\param) = \loss_f' \cdot \nabla_{\param} f$.
Then, the diagonal preconditioner for Adam (with $\beta_1 = \beta_2 = 0$) has
Given a basis $\ebasis$, let $\tilde \grad := \ebasis^\top \grad$ denote the gradient rotated into the basis.
\begin{equation*}
    \precondD[A]_{ii} = (|\tilde \grad_i|)^{-1} = \left((\loss_f')^2 \cdot (\ebasis^\top \nabla_{\param} f_i)^2\right)^{-0.5}.
\end{equation*}

The diagonal preconditioner for Gauss-Newton has 
\begin{equation*}
    \precondD[GN]_{ii} = (\hGN_{ii})^{-0.5}
    = (\loss_f'' \cdot (\ebasis^\top \nabla_{\param} f_i)^2)^{-0.5}
    = ((\loss_f')^2/\loss_f'')^{0.5} \cdot \precondD[A]_{ii}.
    \end{equation*}

\subsubsection{Proof of \texorpdfstring{\Cref{lem:precond_loss_rate}:}{Lemma on} loss convergence of general preconditioners}

\begin{lemma*}[\Cref{lem:precond_loss_rate}, restated]
    For a general preconditioner $\precond$, for linear regression with stochastic Gaussian inputs, the following holds:
    \[ \E[\loss[t]] \leq O\left[\left(1 - \frac{\lambda_{\min}(\mA(\precond))}{3\Tr(\mA(\precond))}\right)^t \loss[0]\right]. \]
\end{lemma*}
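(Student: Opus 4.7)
The plan is to change variables so that the preconditioned SGD on $\param[t]$ reduces to a standard stochastic least-squares iteration with effective covariance $\mA := \mA(\precond)$, and then run the classical one-step mean-square recursion. I would set $z_t := \precond^{-1/2}(\param[t] - \param^*)$. Since the stochastic gradient is $\grad[t] = x(x^\top(\param[t] - \param^*))$ with $x \sim \gN(0, \Cov_x)$, the update $\param[t+1] = \param[t] - \lr \precond \grad[t]$ rewrites as
\[
z_{t+1} = (I - \lr\, \tilde x\, \tilde x^\top)\, z_t, \qquad \tilde x := \precond^{1/2} x \sim \gN(0, \mA),
\]
and the loss becomes $\loss[t] = \tfrac{1}{2}\, z_t^\top \mA\, z_t$, so the problem reduces to SGD on a quadratic with covariance $\mA$.

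Next, I would compute one step of $\E[\loss[t+1]\mid z_t]$ by expanding $\loss[t+1] = \tfrac{1}{2}\, z_t^\top (I - \lr\,\tilde x\,\tilde x^\top)\mA(I - \lr\,\tilde x\,\tilde x^\top) z_t$. The linear-in-$\lr$ term contributes $-\lr\, z_t^\top \mA^2 z_t$ after using $\E[\tilde x \tilde x^\top] = \mA$, and the quadratic term requires the fourth moment $\E[\tilde x\, \tilde x^\top \mA\, \tilde x\, \tilde x^\top]$. By Isserlis' theorem for Gaussian $\tilde x$ (equivalently, applying Lemma \ref{lem:adam_gn_half_fisher} in the rotated coordinates with ``covariance'' $\mA$), this fourth moment equals $2\mA^3 + \Tr(\mA^2)\mA \preceq 3\,\Tr(\mA^2)\,\mA$, where the last step uses $\mA^3 \preceq \lambda_{\max}(\mA)^2 \mA \preceq \Tr(\mA^2)\mA$. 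Combining yields
\[
\E[\loss[t+1] \mid z_t] \leq \loss[t] - \lr\, z_t^\top \mA^2 z_t + 3\lr^2 \Tr(\mA^2)\, \loss[t].
\]

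Finally, I would lower-bound the drift via $z_t^\top \mA^2 z_t \geq \lambda_{\min}(\mA)\, z_t^\top \mA\, z_t = 2\lambda_{\min}(\mA)\loss[t]$ to obtain $\E[\loss[t+1] \mid z_t] \leq (1 - 2\lr\,\lambda_{\min}(\mA) + 3\lr^2 \Tr(\mA^2))\loss[t]$, tune $\lr$ to optimize this quadratic, take total expectations, and iterate the contraction. The main obstacle is controlling the Gaussian fourth-moment noise $\lr^2\Tr(\mA^2)\loss[t]$ sharply enough to match the stated rate: a direct balance gives only $\Theta(\lambda_{\min}(\mA)^2/\Tr(\mA^2))$, so recovering the cleaner form $\lambda_{\min}(\mA)/(3\Tr(\mA))$ asserted in the lemma requires further bounding $\Tr(\mA^2) \leq \lambda_{\max}(\mA)\Tr(\mA)$ and picking the learning rate of order $1/\Tr(\mA)$, with the $O(\cdot)$ in the lemma absorbing the resulting constants.
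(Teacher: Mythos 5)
Your change of variables $z_t = \precond^{-1/2}(\param[t]-\param^*)$, the reduction to SGD on a quadratic with covariance $\mA := \mA(\precond)$, and the Wick-theorem fourth-moment identity $\E[\tilde x\tilde x^\top\mA\tilde x\tilde x^\top]=2\mA^3+\Tr(\mA^2)\mA$ are all correct and parallel the paper's mechanics. The genuine gap is in the final step, and the patch you propose does not close it.

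Tracking $\E[\loss[t]]$ directly produces the one-step recursion
$\E[\loss[t+1]\mid z_t]\le\bigl(1-2\lr\lambda_{\min}(\mA)+3\lr^2\Tr(\mA^2)\bigr)\loss[t]$.
The best achievable contraction over all $\lr$ is $1-\lambda_{\min}(\mA)^2/(3\Tr(\mA^2))$, and applying your suggested bound $\Tr(\mA^2)\le\lambda_{\max}(\mA)\Tr(\mA)$ with $\lr\asymp1/\Tr(\mA)$ only yields $1-\Theta\!\bigl(\lambda_{\min}(\mA)^2/(\lambda_{\max}(\mA)\Tr(\mA))\bigr)$, which is worse than the claimed $1-\lambda_{\min}(\mA)/(3\Tr(\mA))$ by a factor of $\kappa(\mA)$ \emph{in the decay rate itself}, not in the leading constant. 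That slowdown compounds geometrically over $t$ and cannot be hidden inside the outer $O(\cdot)$, which multiplies the bound once.

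The missing idea is a change of potential function. The paper tracks not $\E[\loss[t]]=\E[\Tr(\Cov_x\paramCov[t])]$ but rather $\E[\Tr(\precond^{-1}\paramCov[t])]$, which in your coordinates is exactly $\E[\|z_t\|^2]$. (Concretely, after rotating $\tparamCov[t]=\Cov_x^{1/2}\paramCov[t]\Cov_x^{1/2}$ into the eigenbasis of $\mA$ and extracting the diagonal $v_t$, the paper contracts against $\diag(\mA^{-1})$ rather than $\vec 1$.) With this Lyapunov function, the relevant fourth moment is $\E[\tilde x\tilde x^\top\tilde x\tilde x^\top]=2\mA^2+\Tr(\mA)\mA$, giving
\[
\E[\|z_{t+1}\|^2\mid z_t]\le\|z_t\|^2-2\lr\bigl(2-(2\lambda_{\max}(\mA)+\Tr(\mA))\lr\bigr)\loss[t],
\]
so the noise term now scales as $\lr^2\Tr(\mA)\loss[t]$ rather than $\lr^2\Tr(\mA^2)\loss[t]$, and it cancels against the drift $-4\lr\loss[t]$ as soon as $\lr\asymp1/\Tr(\mA)$. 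Since $\loss[t]\ge\tfrac{1}{2}\lambda_{\min}(\mA)\|z_t\|^2$, this gives geometric decay of $\E[\|z_t\|^2]$ at rate $1-\lambda_{\min}(\mA)/(3\Tr(\mA))$; converting back via $\E[\loss[t]]\le\tfrac{1}{2}\lambda_{\max}(\mA)\E[\|z_t\|^2]\le\kappa(\mA)(1-\lambda_{\min}(\mA)/(3\Tr(\mA)))^t\loss[0]$ costs only a multiplicative $\kappa(\mA)$, which \emph{is} what the $O(\cdot)$ in the lemma absorbs. Put differently: the fourth-moment noise injected along the $\Cov_x$-weighted direction is intrinsically of order $\Tr(\mA^2)$, but along the $\precond^{-1}$-weighted direction it is only of order $\Tr(\mA)$, and only the latter balances the drift at the claimed rate.
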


\begin{proof}
    Let's define
    \begin{equation}
        \paramCov[t] = \E[(\param[t] - \param^\star) (\param[t] - \param^\star)^\top].
    \end{equation}
    For any preconditioner $\precond$, given the update $\param[t+1]-\param^* = (\Id - \lr \precond xx^\top)(\param[t]-\param^*)$,
    we have:
    \begin{equation}
    \begin{split}
        \paramCov[t+1] &= \paramCov[t] - \lr \precond\Cov_x \param[t] - \lr \param[t] \Cov_x\precond +\lr^2 \precond\bigg( 
        2\Cov_x\paramCov[t] \Cov_x+ \Tr(\Cov_x\paramCov[t]) \Cov_x\bigg)\precond\\
        &= (\Id- \lr \precond\Cov_x) \paramCov[t] (\Id- \lr \precond\Cov_x) ^\top +\lr^2 P\bigg( 
        \Cov_x\paramCov[t] \Cov_x+ \Tr(\Cov_x\paramCov[t]) \Cov_x\bigg)P. \\
    \end{split}
    \end{equation}
    Observe that $\E[\loss_t] = \E[\Tr(\Cov_x \paramCov[t])]$.
    This motivates us to make the following definition:
    \begin{equation}
        \tparamCov[t] = \Cov_x^{1/2} \paramCov[t] \Cov_x^{1/2},
    \end{equation}
    and so $\E[\loss_t] = \E[\Tr(\tparamCov[t])].$
    
    Define $\mA(\precond) := \Cov_x^{1/2} \precond\Cov_x^{1/2}$.
    The corresponding update rule is then:
    \begin{equation}
    \begin{split}
        \tparamCov[t+1]
        &= (\Id- \lr \mA(\precond)) \tparamCov[t] (\Id- \lr \mA(\precond)) ^\top
        + \lr^2
        \mA(\precond)\bigg( 
        \tparamCov[t] + \Tr(\tparamCov[t]) \Id \bigg) \mA(\precond)
        \\
        &\preceq (\Id- \lr \mA(\precond)) \tparamCov[t] (\Id - \lr \mA(\precond))^\top
         + 2\lr^2 \Tr(\tparamCov[t]) (\mA(\precond))^2.
    \end{split}
    \end{equation}

    For a given $\precond$, achieving the best loss contraction rate reduces to finding the optimal $\lr$.
    Rotating the left and the right hand side into the eigenbasis of $\mA(\precond)$ and noting that the Trace of a matrix is independent of rotation, we can consider diagonal $\mA(\precond)$ (without loss of generality)
    with the diagonal entries correspond to the eigenvalues.
    Define:
    \[
    v_t = \diag(\tparamCov[t]).
    \]
    We have:
    \begin{equation}
        v_{t+1} =  ( (\Id-\lr \mA(\precond))^2 +\lr^2 \mA(\precond)^2 + \lr^2 \textrm{diag}(\mA(\precond)^2)\vec{1}^\top) v_t.
    \end{equation}
    
    Taking dot product with $\diag(\mA(\precond)^{-1})$ on both sides, we get
    \begin{equation}
    \begin{split}
        \diag(\mA(\precond)^{-1})^{\top}v_{t+1} & = \diag(\mA(\precond)^{-1})^{\top}( (\Id-\lr \mA(\precond))^2 +\lr^2 \mA(\precond)^2 + \lr^2 \diag(\mA(\precond)^2)\vec{1}^\top) v_t \\
        &= \diag(\mA(\precond)^{-1})^{\top} v_t - 2\lr 1^\top v_t + 2\lr^2 \diag(\mA(\precond))^\top v_t + \lr^2 \Tr(\mA(\precond)) 1^\top v_t.
    \end{split}
    \end{equation}

    Since $\tparamCov[t]$ is PSD, $v_t$ has non-negative entries.
    Hence we have $1^\top v_t = \diag(\mA(\precond)^{-1})^\top \mA(\precond) v_t \geq \eval_{\min}(\mA(\precond)) \diag(\mA(\precond)^{-1})^\top v_t$, and
    \begin{equation}    
    \begin{split}
        &\diag(\mA(\precond)^{-1})^{\top}v_{t+1}
        \\
        \leq& \diag(\mA(\precond)^{-1})^{\top} v_t - 2\lr 1^\top v_t + \eval_{\max}(\mA(\precond)) 2\lr^2 1^\top v_t + \lr^2 \Tr(\mA(\precond) 1^\top v_t
        \\
        =& \diag(\mA(\precond)^{-1})^{\top} v_t - \lr \cdot \big(2 - (2\eval_{\max}(\mA(\precond)) + \Tr(\mA(\precond)) \lr \big) 1^\top v_t 
        \\ 
        \leq& \bigg(1 - \eval_{\min}(\mA(\precond)) \cdot \lr \big(2 - (2\eval_{\max}(\mA(\precond)) + \Tr(\mA(\precond)) \lr \big) \bigg) \cdot \diag(\mA(\precond)^{-1})^{\top} v_t.
    \end{split}
    \end{equation}
    The max contraction rate is achieved by setting $\lr = \frac{1}{2\eval_{\max}(\mA(\precond)) + \Tr(\mA(\precond))}$, which gives
    \begin{equation}
    \begin{split}
        \diag(\mA(\precond)^{-1})^{\top}v_{t+1}
        \leq& \bigg(1 - \frac{\eval_{\min}(\mA(\precond))}{2\eval_{\max}(\mA(\precond)) + \Tr(\mA(\precond))}\bigg) \diag(\mA(\precond)^{-1})^{\top}v_{t}
        \\
        \leq& \bigg(1 - \frac{\eval_{\min}(\mA(\precond))}{3\Tr(\mA(\precond))}\bigg) \cdot \diag(\mA(\precond)^{-1})^{\top}v_{t}.
    \end{split}
    \end{equation}

\end{proof}

\subsection{Comparing GN powers}
\label{app:gn_power}

\begin{figure*}[t]
  \centering
  \includegraphics[width=0.6\textwidth]{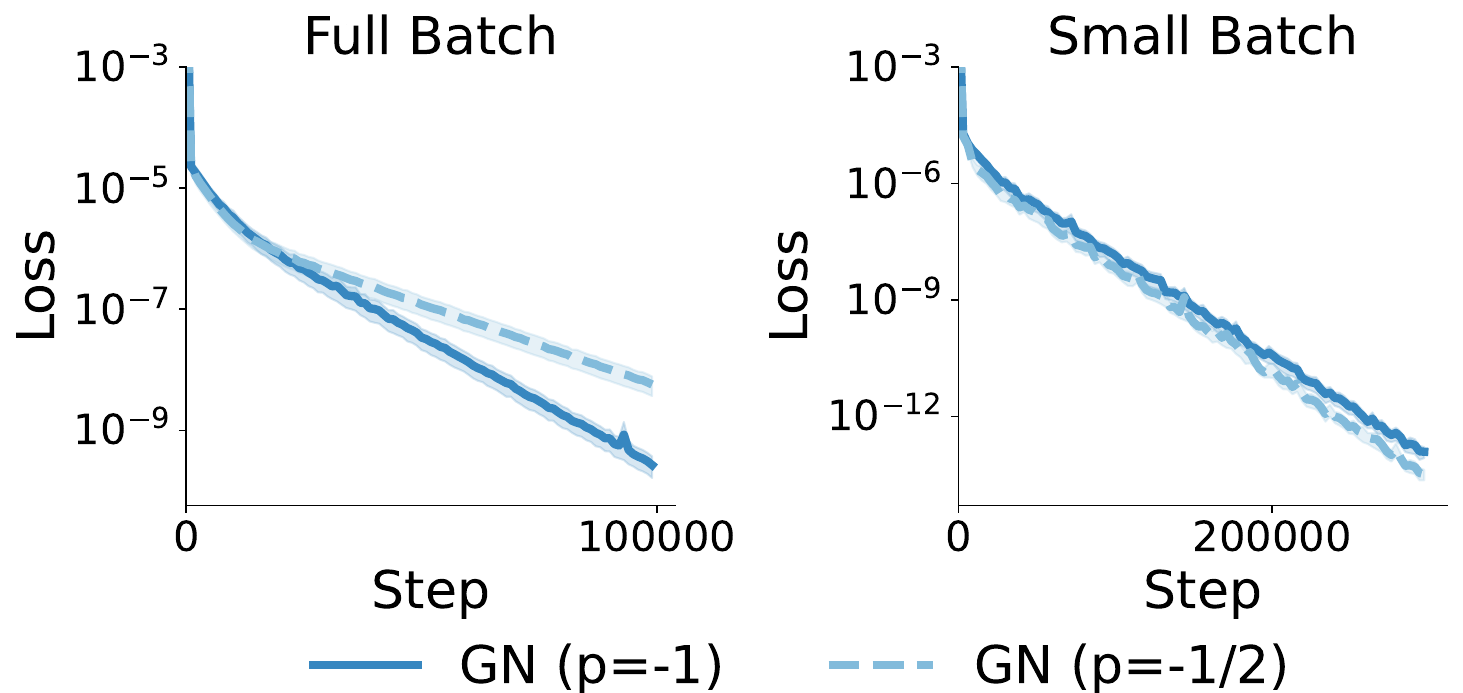}
  \caption{Comparing GN power $\power \in \{-\frac{1}{2}, -1\}$. Contrary to \Cref{fig:GN-powers}, when GN$^{-1}$ has a more favorable condition number, it converges faster or close to GN$^{-\frac{1}{2}}$ on both small and large batches.}
  \label{fig:powers2}
\end{figure*}

This section discusses the comparison between $\GN^{-1}$ and $\GN^{-\frac{1}{2}}$.
We will show that under the identity basis, 
$\GN^{-1/2}$ can outperform $\GN^{-1}$ even with full batches.

Let $\mA(\precond) := \precond^{1/2} \Cov_x \precond^{1/2}$ as defined in \Cref{sec:GN_power}.
One can show that the convergence rate of preconditioned gradient descent \citep{Boyd_Vandenberghe_2004} depends on the condition number of the preconditioned Hessian given by
\begin{equation}
    \kappa(\mA(\precond)) := \frac{\lambda_{\max}(\mA(\precond))}{\lambda_{\min}(\mA(\precond))}.
\end{equation}

By \Cref{lem:precond_loss_rate}, to compare these two powers, it suffices to compare the condition number of for specific preconditioners $\Cov_x^{-1}$ and $\Cov_x^{-1/2}$.
We claim that there exists problems for which, even in the full batch case, in identity basis, $\power = 0.5$ leads to faster convergence than $\power = 1$:
\begin{claim}
    There exists $\Cov_x$ such that $\kappa(\mA(\diag(\Cov_x)^{-1/2})) < \kappa(\mA(\diag(\Cov_x)^{-1}))$.
\end{claim}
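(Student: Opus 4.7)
The plan is to prove this existence statement by constructing an explicit three-dimensional example. First I would reparameterize the problem: writing $\Cov_x = D^{1/2} R D^{1/2}$ with $D := \diag(\Cov_x)$ and $R$ the associated correlation matrix (positive semidefinite, unit diagonal), we have
\[
  \mA(\diag(\Cov_x)^{-1}) \;=\; R,
  \qquad
  \mA(\diag(\Cov_x)^{-1/2}) \;=\; D^{1/4} R D^{1/4}.
\]
So the claim reduces to finding a correlation matrix $R$ together with a positive diagonal $E := D^{1/4}$ such that $\kappa(ERE) < \kappa(R)$---i.e., an $R$ for which the Jacobi normalization is \emph{not} the best diagonal conditioning.

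Second, I would note that dimension two is ruled out: a direct calculation on a general $2\times 2$ correlation matrix $\bigl(\begin{smallmatrix} 1 & r \\ r & 1\end{smallmatrix}\bigr)$ shows Jacobi scaling minimizes the condition number among all positive diagonal conjugations. I would therefore look for a $3\times 3$ candidate with broken symmetry:
\[
  R = \begin{pmatrix} 1 & 0.9 & 0.5 \\ 0.9 & 1 & 0.5 \\ 0.5 & 0.5 & 1 \end{pmatrix},
  \qquad E = \diag(1,\, 1,\, 1/2),
\]
equivalently $\Cov_x = D^{1/2} R D^{1/2}$ with $D = \diag(1, 1, 1/16)$.

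Third, I would verify the inequality by direct eigenvalue computation. Since the antisymmetric mode in coordinates $1, 2$ is untouched by scaling coordinate $3$, the vector $(1, -1, 0)/\sqrt 2$ is a common eigenvector of $R$ and $ERE$ with eigenvalue $0.1$. The remaining two eigenvalues come from restricting each operator to the invariant $2$-plane $\mathrm{span}\{(1,1,0)/\sqrt 2,\, e_3\}$, where it acts as an explicit $2\times 2$ symmetric matrix whose trace and determinant are immediate. Carrying this out yields $\kappa(R) \approx 22.9$ while $\kappa(ERE) \approx 19.7$, establishing the claim; one also checks $\Cov_x \succeq 0$ using the same reduction.

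The main obstacle is choosing the right $R$: symmetric structures such as the equicorrelation matrix $R = (1-\rho) I + \rho\, \mathbf{1} \mathbf{1}^\top$ are fixed points of Jacobi scaling and admit no improvement, as are the rank-one perturbation families I might initially reach for. The asymmetry in the example above---one strongly correlated pair ($0.9$) coexisting with a weakly coupled third coordinate ($0.5$)---places the small eigenvalue of $R$ in a direction orthogonal to $e_3$, so shrinking $D_{33}$ contracts the top eigenvalue without affecting the bottom, improving conditioning.
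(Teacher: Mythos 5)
Your proposal is correct, and it actually does more work than the paper's proof: the paper treats this claim as an empirical observation, constructing $\Cov_x = \ebasis\Eval\ebasis^\top$ by fixing eigenvalues and randomly sampling orthonormal $\ebasis$ until $r(\Cov_x) > 1$ is observed numerically. You instead give a clean, reproducible, closed-form witness. Your reparameterization $\Cov_x = D^{1/2}RD^{1/2}$ so that $\mA(\diag(\Cov_x)^{-1}) = R$ and $\mA(\diag(\Cov_x)^{-1/2}) = D^{1/4}RD^{1/4}$ is exactly the right normalization and makes the statement transparent: it becomes ``Jacobi (unit-diagonal) scaling is not always the best diagonal conditioner of a correlation matrix.'' Your side observation that $2\times 2$ is impossible is also correct---for $R = \bigl(\begin{smallmatrix}1 & r\\ r & 1\end{smallmatrix}\bigr)$ and $E = \diag(1,e)$, one has $\kappa + 1/\kappa + 2 = (1+e^2)^2/(e^2(1-r^2))$, which is minimized at $e=1$---so you correctly jump to $n=3$. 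The numerics check out: with your $R$, the antisymmetric vector $(1,-1,0)^\top$ is a shared eigenvector with eigenvalue $0.1$ (since $E$ leaves the first two coordinates untouched), and the $2\times2$ restrictions give $\kappa(R) = (2.9+\sqrt{2.81})/(2\cdot 0.1) \approx 22.88$ versus $\kappa(ERE) = (2.15+\sqrt{3.2225})/(2\cdot 0.1) \approx 19.73$. Net comparison: the paper's random-sampling demonstration shows the phenomenon is not rare in the space of covariances, while your explicit example is a strictly stronger existence proof and offers more geometric intuition (one tightly coupled pair plus a loosely coupled third coordinate, so shrinking $D_{33}$ lowers $\lambda_{\max}$ without touching $\lambda_{\min}$).
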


Denote $r(\Cov_x) := \frac{\cond(\Cov_x^{1/2} \diag(\Cov_x^{-1}) \Cov_x^{1/2})}{\cond(\Cov_x^{1/2} \diag(\Cov_x^{-1/2}) \Cov_x^{1/2})}$.
We empirically show that there exists $\Cov_x$ such that $r(\Cov_x) > 1$.
We obtain such $\Cov_x$ by fixing the diagonal matrix of eigenvalues $\Eval$ and randomly sampling orthonormal matrices $\ebasis$, and setting $\Cov_x = \ebasis \Eval \ebasis^\top$.

In particular, we construct two covariance matrices $\Cov_{\text{half}}, \Cov_{\text{one}} \in \R{5 \times 5}$, such that $r(\Cov_{\text{half}}) > 1$ (i.e. $\GN^{-1/2}$ is more favorable), and $r(\Cov_{\text{one}}) < 1$ (i.e. $\GN^{-1}$ is more favorable).
As shown in \Cref{fig:GN-powers} and \Cref{fig:powers2},
$\GN^{-\frac{1}{2}}$ indeed converges faster on data from $\Cov_{\text{half}}$, whereas $\GN^{-1}$ converges faster with $\Cov_{\text{one}}$, consistent with the theory.

Characterizing covariance matrices for which $r(\Cov_x) > 1$ is left as future work.

\section{Proof for the logistic lower bound for GN \texorpdfstring{(\Cref{theorem:lower_logistic})}{}}
\label{sec:logistic_proof}

We optimize a weight-tied two-layer linear network $\pf: \R^\dim \rightarrow \R^\dim$,
where 
\begin{equation}
    \pf_i(\param) = \Pr_{\param}(y=1\mid x=e_i)
        = \sigma\!\Bigl(\sum_{j=1}^{\dim} \param_j^{2}x_j\Bigr)
        = \sigma(\param_i^{2}),
    \qquad
    \sigma(z)=\frac1{1+\exp(-z)}.
\end{equation}
For logistic regression, 
the loss is $\loss(\param) = -\E_{x,y}[\sum_{i\in[\dim]} \mathbbm{I}(x=e_i) \big(y\log \pf_i(\theta) + (1-y)\log(1-\pf_i(\theta))\big)]$.
The updates separate by dimension;
the population gradient $\grad(\param)$ and the diagonal Gauss-Newton matrix $\hGN(\param)$ are given by
\begin{equation} 
\label{eq:gn_update_logistic}
    [\grad(\param)]_i = 2\nu_i\,\param_i\bigl(\sigma(\param_i^{2})-\py_i\bigr),
    \quad
    [\hGN(\param)]_{ii} = 4\nu_i\,\param_i^{2}\, \sigma(\param_i^{2})\bigl(1-\sigma(\param_i^{2})\bigr),
\end{equation}
where $\py_i := \Pr(y=1|x=e_i)$.

For a step size sequence $\{\lr[t]\}$ and ridge regularization $\reg \ge 0$, the Gauss-Newton iteration is
\[
    \param[t+1]= M_{\lr[t],\reg}(\param[t]),
    \qquad
    \text{where } M_{\lr,\reg}(\param) = \param-\lr\,(\hGN(\param)+\reg I)^{-1} \grad(\param).
\]
Coordinate-wise, this update reads
\begin{equation}
\label{eq:GN-diag}
    [\,M_{\lr,\reg}(\param)\,]_{i}
    \;=\;
    \param_i
    -\\
     \frac{2\param_i\nu_i(\sigma(\param_i^{2})-\py_i)}
          {4(\param_i)^2\nu_i\,\sigma(\param_i^{2})(1-\sigma(\param_i^{2}))
           +\reg}.
\end{equation}

The proof strategy for \Cref{theorem:lower_logistic} hinges on the following key technical lemma that establishes a learning rate threshold for a single coordinate, where the threshold is a function of both initialization $\param[0]$ and the regularization parameter $\reg$.
Above this threshold, the Gauss-Newton update diverges.
By requiring that all coordinates avoid this divergence to ensure global convergence, we use the lemma to derive a strict upper bound on the algorithm's final learning rate, $\lr^\infty$.
Substituting this necessary restriction into the definition of the local contraction factor will directly yield the lower bound in \Cref{theorem:lower_logistic}, showing that slow convergence is an unavoidable consequence of global stability from the chosen initialization. 

\begin{lemma}\label{lemma:divergence}
For constants $\lr, \reg > 0$, define the one-dimensional update map $M_{\lr,\reg}: \R \to \R$ corresponding to a regularized Gauss-Newton step:
\[
M_{\lr,\reg}(\param)
=
\param - \lr \frac{2\param(\sigma(\param^2)-\py)}{4\param^2\sigma(\param^2)(1-\sigma(\param^2)) + \reg}.
\]
Consider the update rule $\param[t+1] = M_{\lr[t],\reg}(\param[t])$.

There exists a universal constant $c$ such that for any target probability $\py \in [0.6, 0.8]$ and any initial weight $\param[0] > 0$ satisfying $\sigma((\param[0])^2) \le 0.55$, if the non-increasing learning rate sequence satisfies
\[
\lr[t] \ge c \sqrt{\log \frac{1}{\param[0]}} \left( \param[0] + \frac{\reg}{\param[0]} \right) \quad \text{for all } t,
\]
then $|\param[t]|$ diverges geometrically.
\end{lemma}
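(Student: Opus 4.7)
My plan is to split the trajectory into two phases and control the multiplicative factor in the update, which I would write as $M_{\eta,\alpha}(\theta) = \theta(1-R(\theta))$ with $R(\theta) := 2\eta(\sigma(\theta^2)-p)/(g(\theta)+\alpha)$ and $g(\theta) := 4\theta^2 \sigma(\theta^2)(1-\sigma(\theta^2))$, so that $|M(\theta)|/|\theta| = |1-R(\theta)|$. Because $R$ depends on $\theta$ only through $\theta^2$, the contraction factor is symmetric under sign flips, and I only need to track magnitudes; in the small-$\theta$ regime $\sigma<p$ forces $R<0$, while in the saturated regime $\sigma\approx 1 > p$ forces $R>0$, so the sign alternation of the iterates is handled automatically.

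In Phase~1 (the first step), I would use $\sigma((\theta^{(0)})^2) \leq 0.55$ to get $p-\sigma \geq 0.05$ and the bound $\sigma(1-\sigma)\leq 1/4$ to get $g(\theta^{(0)}) \leq (\theta^{(0)})^2$. Rewriting the hypothesis as $\eta^{(0)} \geq c\sqrt{\log(1/\theta^{(0)})}\cdot((\theta^{(0)})^2+\alpha)/\theta^{(0)}$, this yields
\[
    |R(\theta^{(0)})| \;\geq\; \frac{0.1\, \eta^{(0)}}{(\theta^{(0)})^2 + \alpha} \;\geq\; \frac{0.1\, c\, \sqrt{\log(1/\theta^{(0)})}}{\theta^{(0)}},
\]
and hence $|\theta^{(1)}| = \theta^{(0)}(1+|R(\theta^{(0)})|) \geq 0.1\, c\, \sqrt{\log(1/\theta^{(0)})}$. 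Taking $c$ to be a sufficiently large universal constant then forces $(\theta^{(1)})^2 \geq C^2 \log(1/\theta^{(0)})$ for some $C\geq 2$, putting the iterate into the saturated regime in which $1-\sigma((\theta^{(1)})^2) \leq (\theta^{(0)})^{C^2}$ is polynomially small in $\theta^{(0)}$.

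In Phase~2, I would show $R(\theta^{(t)}) \geq 3$ for every $t \geq 1$, which gives $|\theta^{(t+1)}| \geq 2\,|\theta^{(t)}|$ and hence geometric divergence by induction. Whenever $(\theta^{(t)})^2 \geq C^2 \log(1/\theta^{(0)})$, I have $\sigma-p \geq 0.1$ and $g(\theta^{(t)}) \leq 4(\theta^{(t)})^2 (\theta^{(0)})^{C^2}$; splitting the denominator into the $g$-dominated and $\alpha$-dominated sub-cases and using the pointwise hypothesis $\eta^{(t)} \geq c\sqrt{\log(1/\theta^{(0)})}(\theta^{(0)}+\alpha/\theta^{(0)})$ should force $R(\theta^{(t)}) \geq 3$ in both sub-cases. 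Since $z \mapsto z e^{-z}$ is decreasing for $z \geq 1$, each subsequent iterate only makes $g$ smaller, so the bound propagates inductively. The hard part will be the uniform bookkeeping of constants across the two sub-cases and the two edge cases ($\alpha = 0$, where the $\theta^{(0)}$ term of the hypothesis must carry the argument, versus $\alpha \gg (\theta^{(0)})^2$, where the $\alpha/\theta^{(0)}$ term must); the $\sqrt{\log(1/\theta^{(0)})}$ scaling is essentially tight, since it is the minimal step that lifts $\theta^{(1)}$ into the regime where $(1-\sigma)$ is polynomially small in $\theta^{(0)}$, which is exactly what makes the Phase~2 denominator bound usable.
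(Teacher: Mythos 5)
Your plan is correct and follows essentially the same three-part structure as the paper's proof: (i) use $\py-\sigma((\param[0])^2)\ge 0.05$ and $g(\param[0])\le(\param[0])^2$ to show the first step lands at $|\param[1]|\gtrsim c\sqrt{\log(1/\param[0])}$, (ii) deduce that $1-\sigma((\param[1])^2)$ is polynomially small in $\param[0]$ so that $g(\param[1])\lesssim\param[0]$, and (iii) propagate this via the monotonicity of $z\mapsto z(1-\sigma(z))$ (equivalently $ze^{-z}$) to lower bound the multiplier $R(\param[t])$ uniformly over $t\ge 1$. One small caution: the intermediate inequality you state in Phase~2, $g(\param[t])\le 4(\param[t])^2(\param[0])^{C^2}$, carries the geometrically growing factor $(\param[t])^2$ and would not close the induction on its own; the monotonicity observation you cite is what lets you replace it with the $t$-independent bound $g(\param[t])\le g(\param[1])\le 4(\param[1])^2(\param[0])^{C^2}\lesssim\param[0]$, exactly as in the paper, and should be used in place of the literal $(\param[t])^2$ bound when filling in the details.
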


\begin{proof}
The proof proceeds in three parts.
First, we show that the first step, $\param[1]$, becomes large.
Second, we establish a key property satisfied by $\param[1]$. 
Finally, we use this to show that all subsequent iterates grow geometrically.

\emph{Part 1: The first step makes $\param[1]$ large.}
The condition $\sigma((\param[0])^2) \le 0.55$ implies $\param[0] \le
0.5$. Since $\py \ge 0.6$, the term $\sigma((\param[0])^2) - \py$ is
negative, ensuring $\param[1] > \param[0]$. We can lower bound $\param[1]$
as follows:  
\begin{align*}
    \param[1] &= \param[0] - \lr[0] \frac{2\param[0](\sigma((\param[0])^2)-\py)}{4(\param[0])^2\sigma((\param[0])^2)(1-\sigma((\param[0])^2)) + \reg} \\
    &\ge \param[0] + \lr[0] \frac{2\param[0](0.6 - 0.55)}{4(\param[0])^2 \cdot 0.5^2 + \reg} \\
    &\ge \lr[0] \frac{0.1 \param[0]}{(\param[0])^2 + \reg}\\
    &= \lr[0] \frac{0.1}{\param[0] + \reg/\param[0]}.
\end{align*}
Substituting our lower bound for $\lr[0]$ from the lemma statement yields:
\[
    \param[1] \ge \left( c \sqrt{\log \frac{1}{\param[0]}} \left( \param[0] + \frac{\reg}{\param[0]} \right) \right) \frac{0.1}{\param[0] + \reg/\param[0]} = 0.1 c \sqrt{\log \frac{1}{\param[0]}}.
\]
As $\param[0] \to 0$, this lower bound grows, so we can choose the universal constant $c$ large enough to make $\param[1]$ arbitrarily large. 

\emph{Part 2: Establishing a key property of $\param[1]$.}
We now show we can choose $c$ large enough to ensure two conditions
hold simultaneously for $\param[1]$:  
\begin{enumerate}
    \item[(i)] $\sigma((\param[1])^2) \ge 0.9$.
    \item[(ii)] $4(\param[1])^2 (1-\sigma( (\param[1])^2)) \le \param[0]$.
\end{enumerate}
Condition (i) is met by choosing $c$ sufficiently large.
For condition (ii), we use the facts that $1-\sigma(z) \le e^{-z}$ and that
$z(1-\sigma(z))$ is a decreasing function for $z \ge 2$.~\footnote{Indeed, $\frac{\mathrm d}{\mathrm dz}[\,z(1-\sigma(z))\,] = (1-\sigma(z)) - z\sigma(z)(1-\sigma(z))< 0$ once $z \ge 2$.}
Let $\param[1]_{\text{low}} := 0.1c \sqrt{\log(1/\param[0])}$.
This implies: 
\begin{align*}
  4(\param[1])^2 (1-\sigma((\param[1])^2))
& \le 4(\param[1]_{\text{low}})^2 (1-\sigma((\param[1]_{\text{low}})^2))
  \le 4 (\param[1]_{\text{low}})^2 e^{-(\param[1]_{\text{low}})^2} \\
    &= 4 (0.01c^2)\log(1/\param[0]) \cdot \exp\left(-(0.01c^2)\log(1/\param[0])\right) \\
    &= (0.04c^2)\log(1/\param[0]) \cdot (\param[0])^{0.01c^2}\\
    &= \bigg((0.04c^2)\log(1/\param[0]) (\param[0])^{0.01c^2-1}\bigg) \cdot \param[0].
\end{align*}
For a sufficiently large constant $c$ (e.g., $0.01c^2 > 2$), the term in the large parenthesis is less than 1, because for a fixed $\param[0] \in (0, 0.5]$, the polynomial term $(\param[0])^{0.01c^2-1}$ decays much
faster than the logarithmic term $\log(1/\param[0])$ grows.
This establishes condition (ii).  

\emph{Part 3: Proving geometric divergence for $t \ge 1$.}
Consider any $\lr$ satisfying the learning rate lower bound, i.e. suppose:
\[
\lr \ge c \sqrt{\log \frac{1}{\param[0]}} \left( \param[0] + \frac{\reg}{\param[0]} \right) .
\]
We show that for any $\param$ where $\param^2 \ge (\param[1])^2$, it follows that $|M_{\lr,\reg}(\param)| \ge \sqrt{2}|\param|$.  
First, rewrite the update as 
\[
  M_{\lr,\reg}(\param) = \param \left(1 - \frac{2\lr(\sigma(\param^2)-\py)}{4\param^2\sigma(\param^2)(1-\sigma(\param^2)) + \reg}\right).
\]
Let $K(\param) := \frac{2\lr(\sigma(\param^2)-\py)}{4\param^2\sigma(\param^2)(1-\sigma(\param^2)) + \reg}$;
note that $K(\param) > 0$ provided that $\param^2 \geq (\param[1])^2$.
We seek to show $|1 - K(\param)| \ge \sqrt{2}$, which is true if $K(\param) \ge 1+\sqrt{2}$.

We lower bound $K(\param)$ for any $\param$ where $\param^2 \ge (\param[1])^2$.
The numerator can be lower-bounded using condition (i): 
\begin{equation}
    2\lr(\sigma(\param^2)-\py) \ge 2\lr(\sigma((\param[1])^2)-\py) \ge 2\lr(0.9-0.8) = 0.2\lr.
\end{equation}
For the denominator, we use the fact that $z(1-\sigma(z))$ is decreasing (for $z \ge
2$), condition (ii), and that $\param[0] \in [0, 0.5]$:
\[
  4\param^2\sigma(\param^2)(1-\sigma(\param^2)) + \reg \le 4\param^2(1-\sigma(\param^2)) 
  + \reg \le 4(\param[1])^2\bigl(1-\sigma((\param[1])^2)\bigr) + \reg \le \param[0] + \reg \le \param[0] + \frac{\reg}{\param[0]}.
\]
Combining these bounds gives:
\[
K(\param) \ge \frac{0.2\lr}{\param[0] + \reg/\param[0]}\ge 0.2 c \sqrt{\log \frac{1}{\param[0]}},
\]
where the last step follows by substituting the lower bound for $\lr$.
Since $\param[0] \le 0.5$, we have $\log(1/\param[0]) \ge \log(2)$.
We can choose the universal constant $c$ large enough such that $0.2 c\sqrt{\log 2} \ge 1+\sqrt{2}$. 

Thus, for any $t \ge 1$, we have $|\param[t+1]| = |M_{\lr[t],\reg}(\param[t])| \ge \sqrt{2}|\param[t]|$,
which shows that $|\param[t]|$ diverges geometrically.
\end{proof}

We are now ready to complete the proof of Theorem~\ref{theorem:lower_logistic}.
\begin{proof}
(Theorem~\ref{theorem:lower_logistic}) The proof proceeds by using Lemma~\ref{lemma:divergence} to find an
upper bound on the final learning rate $\lr[\infty]$, and then
substituting this bound into the definition of the local contraction
factor $\spectralRad$. 

At the optimum $\param^*$, the diagonal entries of the Fisher matrix $\hGN_*=\hGN(\param^*)$ are
\begin{equation}
\label{eq:logreg_eval_opt}
    \eval_i(\hGN_*) = 4 (\param_i^*)^{2} \nu_i \sigma((\param_i^*)^2)\bigl(1-\sigma((\param_i^*)^2)\bigr).
\end{equation}

Recall the target probabilities $\py_i = \sigma((\param_i^*)^2)$ are assumed to lie in $[0.6, 0.8]$.
This implies that $(\param_i^*)^2$'s are bounded by a universal constant.
Thus, the term $4 (\param_i^*)^2 \sigma((\param_i^*)^2)\bigl(1-\sigma((\param_i^*)^2)\bigr)$ is also bounded by universal constants,
and we conclude that the Fisher eigenvalues are proportional to the sampling probabilities $\{\nu_i\}$.
In particular, 
\[
  \eval_{\min}(\hGN_*) \ge \nu_{\min}/c_1,
\]
where $c_1$ is a universal constant.

The Gauss-Newton update for each coordinate $\param_i$ can be analyzed independently.
For the sequence $\param[t]$ to converge $\param^*$, the iterates for each coordinate $\param_i[t]$ must also converge to $\param_i^*$.
From~\Cref{eq:GN-diag} and taking $\reg/\nu_i$ as the regularization parameter, the update rule for each coordinate can be written as: 
\[
    [\,M_{\lr,\reg}(\param)\,]_i
    \;=\;
    \param_i
    -\lr\,
     \frac{2\param_i(\sigma(\param_i^{2})-\py_i)}
          {4(\param_i)^2\,\sigma(\param_i^{2})(1-\sigma(\param_i^{2}))
           +\reg/\nu_i}.
\]
We can apply Lemma~\ref{lemma:divergence} coordinate wise.
Since $\{\lr[t]\}$ is non-increasing, Lemma~\ref{lemma:divergence} implies the limiting stepsize $\lr[\infty]$ must satisfy the following for each coordinate $i \in [\dim]$: 
\[
\lr[\infty] \le c\sqrt{\log \frac{1}{\param[0]_i}} \left( \param[0]_i + \frac{\reg / \nu_i}{\param[0]_i} \right).
\]

In our setting, the initial weights are $\param[0]_i = 1/\sqrt{\dim}$.
To get a single upper bound on $\lr[\infty]$, we take the tightest possible constraint derived from above,
which is when $\nu_i$ is at its maximum, $\nu_{\max}$.
Thus, for convergence to be possible, $\lr[\infty]$ must be bounded by: 
\[
  \lr[\infty] \le
  c\sqrt{\log \dim}\left(\frac{1}{\sqrt{\dim}}+\frac{\sqrt{\dim}\reg}{\nu_{\max}}\right).
\]

The local contraction factor is the spectral radius of
$I -\lr[\infty] (\hGN_*+ \diag(\{\reg/\nu_i\}))^{-1}\hGN_*$,
whose eigenvalues are $1 -\lr[\infty] \frac{\eval_i(\hGN_*)}{\eval_i(\hGN_*)+\reg/\nu_i} \geq 1 - 1 -\lr[\infty] \frac{\eval_i(\hGN_*)}{\eval_i(\hGN_*)+\reg}$ (recall \Cref{eq:logreg_eval_opt}).
We can lower bound the spectral radius by considering the smallest eigenvalue of $\hGN_*$, $\lambda_{\min}$: 
\[
 \spectralRad(\eta^{\infty},\reg) \ge 1 - \eta^{\infty} \frac{\lambda_{\min}(\hGN_*)}{\lambda_{\min}(\hGN_*) + \reg}.
\]
Substituting the upper bound on $\lr[\infty]$ from above:
\begin{align*}
  \spectralRad(\lr[\infty],\reg)
   &\ge 1 - \left( c\sqrt{\log \dim}\left(\frac{1}{\sqrt{\dim}}+\frac{\sqrt{\dim}\reg}{\nu_{\max}}\right) \right) \frac{\eval_{\min}}{\eval_{\min} + \reg}\\
   &= 1 - c\sqrt{\log \dim} \left( \frac{1}{\sqrt{\dim}} \cdot
      \frac{\eval_{\min}}{\eval_{\min}+\reg}
      + \frac{\sqrt{\dim}\reg}{\nu_{\max}} \cdot \frac{\eval_{\min}}{\eval_{\min}+\reg} \right).
\end{align*}
We can bound the terms in the parenthesis using
$\frac{\eval_{\min}}{\eval_{\min}+\reg} \le 1$ and
$\frac{\reg}{\eval_{\min}+\reg} \le 1$: 
\begin{align*}
 \spectralRad(\lr[\infty],\reg) &\ge 1 - c\sqrt{\log \dim} \left( \frac{1}{\sqrt{\dim}} + \frac{\sqrt{\dim}\eval_{\min}}{\nu_{\max}} \right).
\end{align*}
Using the our lower bound $\eval_{\min} \ge \nu_{\min}/c_1$ from
above, and absorbing constants into $c'$:  
\[ 
\spectralRad(\lr[\infty],\reg) \ge 1 - c'\sqrt{\log \dim} \left( \frac{1}{\sqrt{\dim}} + \sqrt{\dim}\frac{\nu_{\min}}{\nu_{\max}} \right),
\]
which implies the claimed result.
\end{proof}

\section{Experiments}

\subsection{Additional experiment information}
\label{app:expr_details}

\paragraph{Hyperparameters, hardware, and runtime}
The learning rate ($\lr$) search is first performed at factors of 3 (e.g. $0.01, 0.003, 0.001$) and then at factors of 2 or finer around the optimal value.
The regularization ($\reg$) search is at factors at 10 (e.g. $10^{-3}, 10^{-4}$).
For Adam, we additionally sweep over $\beta_2 \in \{0, 0.9, 0.95, 0.99\}$.
When comparing batch sizes, we vary the batch size used for computing gradient, and always use a large batch size (4096) for Gauss-Newton matrix to ensure an accurate basis estimation.
Experiments were run on NVIDIA A100 GPUs.
Simulation runs in \S\ref{sec:autotune} each completes within 1min.
Simulation runs in \S\ref{sec:GN_power} takes 9min for every 100k steps.
The parity and staircase runs take around 10min for every 1k steps.
For CIFAR experiments, runs under the identity basis take less than 5min each, and runs under the Kronecker approximation of the eigenbasis take around 80min each.

\paragraph{Kronecker factorization}
\label{app:kron}
Inspired by prior work~\citep{martens15KFAC,gupta2018shampoo,vyas2024soap},
our experiments use Kronecker factorization as a computationally efficient approximation to the full eigenbasis.
Given a matrix-valued parameter $W \in \R^{\dimw \times \dimh}$,
let $\grad \in \R^{\dimw\dimh}$ denote the flattened gradient,
and $\Grad \in \R^{\dimw \times \dimh}$ denote the unflattened gradient.
The $\dimw\dimh \times \dimw\dimh$ Gauss-Newton matrix can be approximated by a Kronecker factorization as
\begin{equation}
    \hGN := \E[\grad\grad^\top]
    \approx \E[\Grad\Grad^\top] \otimes \E[\Grad^\top\Grad],
\end{equation}
where $\otimes$ denote the Kronecker product.
The eigenvalues and eigenvectors of the Kronecker product are the products and Kronecker products of the factors;
hence the eigenbasis of $\hGN \in \R^{\dimw\dimh \otimes \dimw\dimh}$ can be approximated by computing the eigenbasis of the smaller $\E[\Grad\Grad^\top] \in \R^{\dimw \times \dimw}$, $\E[\Grad^\top\Grad] \in \R^{\dimh \times \dimh}$.

\textit{Is Kronecker approximation a good proxy for the full eigenbasis?}
\cite{benzing2022gradient} showed that Kronecker-factored approximation such as KFAC~\citep{martens15KFAC} can lead to better performance than using the full eigenbasis in some cases.
They attributed the gain to heuristic damping, which effectively controls the step sizes and was beneficial in their experiments.
Our experiments do not use such heuristic damping, and we find the Kronecker approximation to behave similarly to the full eigenbasis,
while being much more compute-efficient.

\subsection{Details for MLP experiments with squared loss}
\label{app:expr_details_mse}

This section provides details for the MLP experiments in \S\ref{sec:non_convex_mse}.

We learn all tasks with single-hidden-layer MLPs given by $\hat y = f(x; \param) = a \cdot \sigma(w^\top x + b)$,
where $w \in \mathbb{R}^{\dim\times \hiddendim}, a, b \in \R^\dim$, with $\dim$ and $\hiddendim$ being the input and hidden dimensions respectively.
The non-linearity $\sigma$ defaults to ReLU unless specified otherwise.

Below we provide detailed descriptions of the tasks:
\begin{itemize}
    \item \textbf{Learning from a random teacher network.}
    We first construct a teacher-student setting
    to evaluate our hypotheses on a non-convex example.
    Consider input vectors,
    $\mathbf{x}_i \sim \mathcal{N}(0, \Cov_x) \in \mathbb{R}^{\dim}$
    where $\Cov_x \in \mathbb{R}^{\dim \times \dim}$ is a random covariance matrix.
    We initialize a random teacher model,
    $f_\teacher: \R^{\dim} \rightarrow \R$,
    as a single hidden layer MLP.
    For each input vector, we sample output labels
    from the teacher: $y_i = f(x_i; \param_{\teacher}) = a_\teacher\cdot \sigma(w_\teacher^{\top}x_i + b_\teacher)$,
    where, $w_\teacher \in \mathbb{R}^{\hiddendim\times \dim}; a_\teacher, b_\teacher\in \mathbb{R^\hiddendim}$,
    and $n_\teacher$, $h_\teacher$ are the teacher's 
    input and hidden dimensions, respectively.
    The objective is to learn this $(x, y)$
    mapping using an identical student model
    which has a hidden dimension
    $\dim_\student = 2\times \dim_\teacher$.

    \item \textbf{Feature learning with sparse parity.}
    Sparse parity is well-studied and widely adopted for understanding neural network optimization~\citep{barak2022hidden,bhattamishra2022simplicity,edelman2023pareto,morwani2023feature,abbe2024generalization}.
    It can be viewed as learning a sparse ``feature'' embedded in a much higher ambient dimension.
    Specifically, $(\dim, k)$-parity is a function from $\vx \in \{\pm 1\}^\dim$ to $y = \prod_{i \in \gS} x_i \in \{\pm 1\}$, 
    where $\gS=\{s_1,s_2,\dots ,s_k\}\subseteq [\dim]$ is the unknown support of relevant coordinates.
    In our experiments, we set $\dim\!=\!20$, $k\!=\!6$.

    \item \textbf{Feature learning with staircase.}
    We consider a multi-feature generalization of sparse parity called the staircase function~\citep{abbe2022mergedstaircase,abbe2023leap}.
    Given input $x \in \{\pm 1\}^{\dim}$,
    the label $y$ is the sum of several parity functions, whose supports are specified by $k$ segments.
    Specifically, $y = \sum_{(s_i, e_i) \in \gP} \prod_{j = s_i}^{e_i - 1} x_{j}$,
    where $\gP = \{(s_i, e_i)\}_{i \in [k]}$ are the start (inclusive)
    and stop (exclusive) indices of a segment.
    For our experiments, we set each segment
    to be of the same size and choose $\dim=21, k=3$, i.e., $\mathcal{P} = \{(0, 7), (7, 14), (14, 21)\}$, and $y \in \{-3, -1, 1, 3\}$.

    \item \textbf{CIFAR-10}~\citep{CIFAR10}.
    The input images are flattened to a length-3072 vector and the labels are treated as 10-dimensional one-hot vectors.
    We use 400 steps in all experiments, which is sufficient for large-batch eigenbasis experiments to reach around 47\% accuracy, a reasonable performance for 2-layer MLPs.
\end{itemize}

\textit{What about using power $\power = -1$ for Adam?}
In \S\ref{sec:setup}, we introduced the power $\power \in \{-\frac{1}{2}, -1\}$ as a hyperparameter for Gauss-Newton (GN) but kept the power Adam to be $-\frac{1}{2}$, following the standard definition of the Adam algorithm.
For completeness, we experiment on Adam with $\power=-1$ on sparse parity. 
Our results in \Cref{fig:adam_power_2x2_plots} is consistent with \cite{lin24remove}, which finds that $\power=-1$ shows comparable empirical performance to the standard choice of $\power=-0.5$, especially under low-precision.

\begin{figure}[t]
  \centering
  \includegraphics[width=\textwidth]{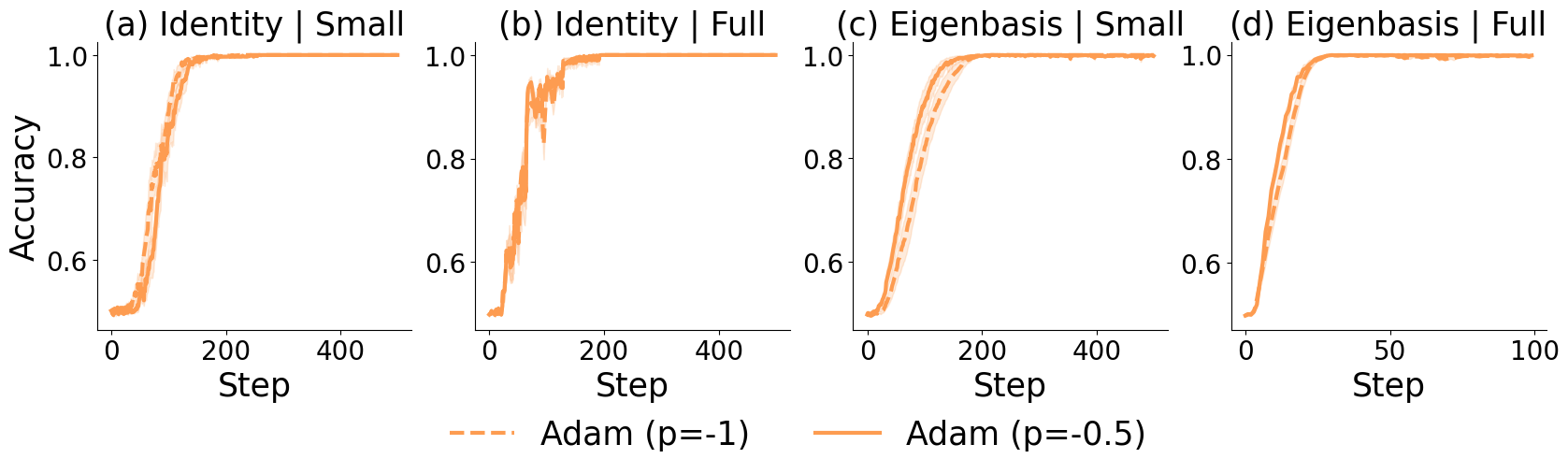}
  \caption{\textbf{Sparse parity}, comparing Adam, with power $-1$ or $-\frac{1}{2}$ for the full $2 \times 2$ grid (\Cref{tab:basis-batch-grid}).
  }
  \label{fig:adam_power_2x2_plots}
\end{figure}

\subsection{Interpolating between basis}
\label{app:interpolation}

\begin{algorithm}[ht]
  \centering
  \caption{Geodesic interpolation between bases}\label{alg:interpolation}
  \begin{tabular}{@{}r p{0.95\textwidth}@{}}
    \alglinenumber\ & \textbf{Input:} %
    full GN basis $U$, interpolation factor $\alpha$.
    \\ 
    \alglinenumber\ & \quad Compute the matrix log $K := \text{logm}(U)$. \\
    \alglinenumber\ & \quad Compute the matrix exponent $\hat U := \text{exp}(\alpha \cdot K)$. \\
    \alglinenumber\ & \quad Obtain the real part $U_{\alpha} := \text{real}(\hat U)$. \\
    \alglinenumber\ & \textbf{Output:} $U_{\alpha}$. \\
    \bottomrule
  \end{tabular}
\end{algorithm}

In \S\ref{sec:non_convex_mse}, we discussed the behavior of
GN and Adam on two kinds of basis:
identity and full-GN, depicting an incorrect
and a correct basis to precondition the gradient, respectively.
To provide a more complete picture of the effect of basis,
we provide results with more granularity with respect to the choice of basis.

Particularly, we compare GN and Adam on bases of ``intermediate'' quality 
by interpolating between the identity and full-GN basis.
Given the identity basis $I$ and the eigenbasis $\ebasis$,
we construct an interpolation $\ebasis_{\interp}$, parameterized by some interpolation factor $\interp \in \{0, 0.25, 0.5, 0.75, 1\}$, using geodesic interpolation (\Cref{alg:interpolation}).
In particular, $\ebasis_{0} = I$ and $\ebasis_{1} = \ebasis$.
Results are shown in \Cref{fig:geodesic-interp}.
In particular, Adam and $\GN^{-\frac{1}{2}}$ behave similarly under the stochastic regime across basis choices, as predicted by the theory (\S\ref{sec:stochastic}).

\begin{figure}[t]
  \centering
  \begin{subfigure}[b]{\linewidth}
    \centering
    \includegraphics[width=\linewidth]{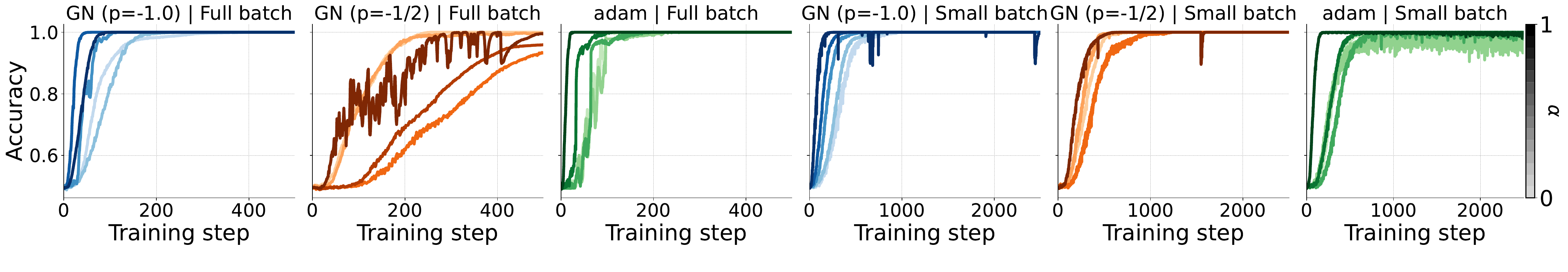}
    \caption{(20, 6)-Sparse parity}
    \label{fig:geodesic-interp-parity}
  \end{subfigure}
  \hfill
  \begin{subfigure}[b]{\linewidth}
    \centering
    \includegraphics[width=\linewidth]{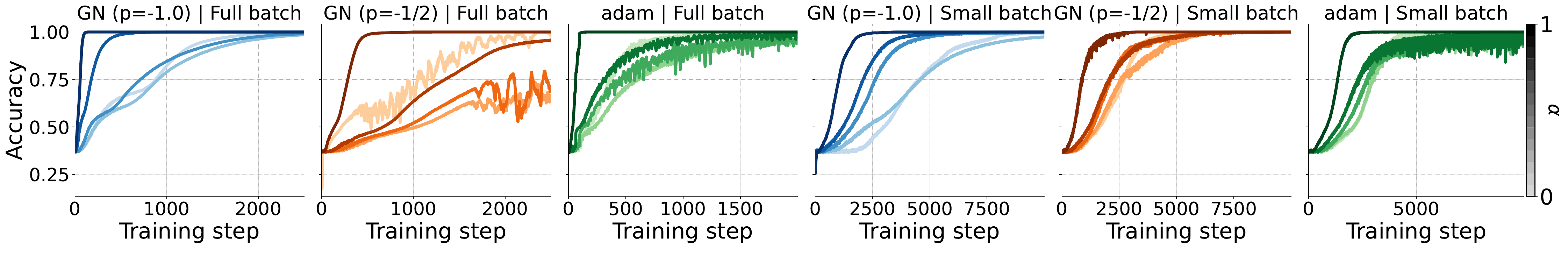}
    \caption{(21,3)-Staircase}
    \label{fig:geodesic-interp-staircase}
  \end{subfigure}
  \caption{\textbf{Basis interpolation}: Comparing GN$^{-1}$, GN$^{-1/2}$, and Adam under various bases, for parity and staircase (\S\ref{sec:non_convex_mse}).
  Each basis is obtained by a geometric interpolation between the eigenbasis (darker colors) and the identity basis (lighter colors), parmaeterized by a factor $\interp \in \{0, 0.25, 0.5, 0.75, 1\}$.
  }
  \label{fig:geodesic-interp}
\end{figure}

\subsection{Details for logistic experiments}
\label{app:expr_details_logistic}

\paragraph{Simulation for \S\ref{sec:logistic}}
We run simulation following the 2-layer linear network example in \S\ref{sec:logistic}.
The inputs are 2048-dimensional one-hot vectors following a power law decay, with $\nu_i := \Pr(x = e_i) \propto i^{-c}$.
We set $c = 0.6$ in the experiments.
The label distributions are set to $\py_i = p(y=1|x = e_i) = 0.75$ for all $i$.

\paragraph{Transformer experiments} This section provides details for the Transformer experiments in \S\ref{sec:expr_attn}.
The attention module shares a similar structure as the logistic regression results in \S\ref{sec:logistic}: the inner product of query and key matrices resembles the reparameterization, and the softmax function resembles the logistic function.

We consider a selection task motivated by the example in \S\ref{sec:logistic}:
The input is a sequence of $\seqlen$ Gaussian vectors followed by a length-$\dim$ one-hot vector ($\dim \geq \seqlen$) specifying which input is used in the regression task, i.e. $[x_1, \cdots, x_\seqlen, s]$, with the label given by $y = \langle\param_*, x_i \rangle$ if $s = \ve_i$.
In the experiments, we set $\seqlen = 32$ and use a 1-layer 1-head Transformer with dimension 128.
\Cref{fig:transformer} shows results the comparison of GN and Adam under the Kronecker-approximated eigenbasis with large batches (batch size = 16384), where GN shows slower convergence, consistent with the theory.
Results are aggregated over 10 seeds, and each run takes around 90min to complete.

\end{document}